\newcommand{\expect}[1]{\mathbb{E}\left[ #1 \right]}
\newcommand{\norm}[1]{\left\| #1 \right\|}
\newcommand{\expectBig}[1]{\mathbb{E}\Big[ #1 \Big]}
\newcommand{\normBig}[1]{\Big\| #1 \Big\|}
\newcommand{\EEb}[2]{\mathbb{E}_{#1}\left[ #2 \right]}
\newcommand{\dotprod}[2]{\left\langle #1,#2 \right\rangle}
\newcommand{\circledOne}{\text{\ding{172}}}
\newcommand{\circledTwo}{\text{\ding{173}}}
\newcommand{\circledThree}{\text{\ding{174}}}
\newcommand{\circledFour}{\text{\ding{175}}}  
\newcommand{\circledFive}{\text{\ding{176}}} 
\newcommand{\circledSix}{\text{\ding{177}}}
\definecolor{bgcolor}{rgb}{0.8,1,1}
\definecolor{bgcolor2}{rgb}{0.8,1,0.8}
\definecolor{mydarkgreen}{RGB}{39,130,67}
\definecolor{mydarkred}{RGB}{192,47,25}
\newcommand{\green}{\color{mydarkgreen}}
\newcommand{\red}{\color{mydarkred}}
\newcommand{\cmark}{{\green\ding{51}}}
\newcommand{\xmark}{{\red\ding{55}}}
\newcommand{\paramr}[1]{\textcolor{mydarkred}{#1}}
\newcommand{\paramg}[1]{\textcolor{mydarkgreen}{#1}}
\begin{document}


\begin{mainpart}

  

\section{Introduction} \label{sec:intro}
Fine-tuning pre-trained Large Language Models (LLMs) has become the standard technique in modern natural language processing~\cite{howard2018universal,zhang2019dialogpt,zhang2024selective,lester2021power}, enabling rapid adaptation to diverse downstream tasks with minimal labeled data~\cite{raffel2020exploring, sanh2021multitask,zaken2021bitfit}. These models, often trained on massive corpora, achieve state-of-the-art results when fine-tuned on specific applications, including question answering, summarization, and dialogue generation. The fine-tuning setup can be considered as a stochastic unconstrained optimization problem of the form 
\begin{equation}
\label{eq:main_problem}
    f^* := \min_{x \in \mathbb{R}^d} \left\{ f(x) := \EEb{\xi \sim \mathcal{D}}{f(x, \xi)} \right\},
\end{equation}
where $x$ are parameters of the fine-tuned LLM, $\mathcal{D}$ is the data distribution available for training, and $f(x, \xi)$ is the loss on data point $\xi$. 

The de facto standard for solving \eqref{eq:main_problem} is the use of First-Order (FO) optimization methods. These approaches assume access to the stochastic gradient \(\nabla f(x, \xi)\). Classical FO methods, such as Stochastic Gradient Descent (SGD) \cite{amari1993backpropagation} and Adam \cite{kingma2014adam}, remain the most widely used techniques for model adaptation due to their efficiency and compatibility with the backpropagation algorithm. Nevertheless, in contemporary fine-tuning tasks, alternative FO algorithms are often preferred.

A recent trend in optimization for LLMs is to represent optimization parameters in matrix form rather than as vectors \cite{bernstein2024old, bernstein2024modular, pethick2025training}. Algorithms such as Shampoo \cite{Shampoo} and SOAP \cite{vyas2024soap} have demonstrated superior performance on LLM training tasks compared to Adam and SGD \cite{dahl2023benchmarking}, which operate in an element-wise manner and do not utilize the underlying structure of the model parameters. Currently, the canonical matrix-based optimization algorithm is Muon \cite{muon_base, muon_scaled, muon_convergence}, which integrates the principles of Shampoo and SOAP but does not employ any preconditioning matrices \cite{muon_base}. The central idea of this method is to project the gradient at each iteration onto the space of semi-orthogonal matrices using the Newton–Schultz algorithm \cite{bernstein2024old}.


However, as LLMs continue to scale, the backpropagation procedure, necessary for FO methods, becomes increasingly expensive in terms of memory consumption. For instance, the memory cost of computing gradients during the training of OPT-13B is reported to be more than an order of magnitude larger than that of inference \cite{zhu2023efficient}. This imbalance poses a serious bottleneck for deploying LLM fine-tuning in resource-constrained environments such as edge devices \cite{zhu2023pockengine, gao2024enabling}, consumer-grade GPUs \cite{liao2024lohan, yin2023modulora}, or large-scale distributed settings \cite{han2015deep}.
To overcome these limitations, researchers are exploring various approaches to reduce the size of the required optimizer statistics. One such approach is the SignSGD algorithm, initially developed for distributed optimization \cite{yang2020adaptive}, but which has also proven effective in LLM fine-tuning \cite{pengsoftsignsgd}, owing to its simplicity, memory efficiency, and surprising empirical effectiveness across a range of adaptation tasks \cite{jin2020stochastic,mengoli2025byzantine}. SignSGD was first rigorously analyzed in the FO setting by \cite{bernstein2018signsgd} and \cite{balles2017dissecting}. 
Minimal memory usage and straightforward hyperparameter tuning make SignSGD an attractive choice for memory-constrained fine-tuning of LLMs ($\sim4/3\times$ memory usage compared to Adam). 
Beyond SignSGD, other FO methods also target memory reduction. AdaFactor \cite{shazeer2018adafactor} was among the first, lowering memory usage by storing a single value per block ($\sim4/3\times$). Additional techniques include quantizing optimizer states to lower-precision formats \cite{dettmers20218, li2023memory} ($\sim4/3\times$ and $\sim16/9\times$ respectively) and fusing the backward pass with optimizer updates \cite{lv2023full} ($\sim4/3\times$), further decreasing memory demands during training.

Nevertheless, the most memory-efficient methods are based on the Zero-Order (ZO) optimization technique, which avoids backpropagation entirely by estimating gradients using only forward passes. 
This flexibility allows us to treat the model as a black box, optimizing performance with minimal assumptions about its architecture or implementation details. Recent studies \cite{malladi2023mezo} have demonstrated the practical benefits of this approach: for example, the MeZO algorithm applies classical ZO SGD \cite{ghadimi2013stochastic} to fine-tune LLMs while maintaining four times lower memory requirements than traditional FO methods~\cite{malladi2023fine} ($\sim10\times$ compared to Adam~\cite{zo_bench}). In ZO methods it is assumed that we only have access to the values of the stochastic function $f(x, \xi)$ from \eqref{eq:main_problem} \cite{flaxman2005online,ghadimi2013stochastic}. Within LLMs pretraining or fine-tuning context, oracles are forward passes with small perturbations in parameters of the model. To estimate gradients, authors use finite differences: 
\begin{equation}
\label{eq:zo_grad}
    \nabla f(x, \xi) \approx \widetilde{\nabla} f(x, \xi) = \frac{f(x + \tau e, \xi) - f(x - \tau e, \xi)}{2\tau} e,
\end{equation}
where $ \tau > 0 $ is a small number, frequently referred to as a smoothing parameter, and $e \in \mathbb{R}^d$ is some random vector \cite{nesterov2017random,duchi2015optimal, malladi2023fine, zo_bench}. In the next section, we provide review about different ZO optimization methods, that somehow utilize formula \eqref{eq:zo_grad}.


\section{Related Work and Our Contributions} \label{sec:rw}

\textbf{ZO gradient estimators.} 
The simplest zero-order gradient estimator employs the estimate \eqref{eq:zo_grad} as the stochastic gradient. However, even this approach presents specific challenges, particularly regarding the selection of an appropriate distribution from which to sample the random vector $e$.
The most commonly employed distributions include a uniform sampling over the unit sphere: $e \sim RS(1)^d_{\| \cdot \|}$ \cite{flaxman2005online,nesterov2017random}, a Gaussian distribution with zero mean and identity covariance matrix: $e \sim \mathcal{N}(0, I)$ \cite{nesterov2017random,ghadimi2013stochastic}, and standard basis one-hot vectors \cite{duchi2015optimal,shamir2013complexity}. 
Also, some papers \cite{lian2016comprehensive, sahu2019towards, akhtar2022zeroth} utilize the so-called full coordinate estimate, which approximates the gradient across all basis vectors. However, this approach requires $\mathcal{O}(d)$ calls to the zero-order oracle, making it impractical for large-scale fine-tuning tasks.
Despite the prevalence of these approaches, alternative and more complicated sampling strategies have also been explored.

In \cite{roberts2023direct, nozawa2025zero}, the authors explore low-dimensional perturbations within random subspaces. The central concept of random subspace methods involves generating the perturbation vector $e$ within a subspace spanned by a projection matrix $P \in \mathbb{R}^{d \times r}$ and a low-dimensional random vector $\tilde{e} \in \mathbb{R}^r$: $e = P \tilde{e}$. Typically, $P$ and $\tilde{e}$ are sampled from a Gaussian distribution and $r \ll d$. The primary motivation for this method lies in the fact that gradients during the fine-tuning process exhibit a low-dimensional structure \cite{nozawa2025zero}.
In \cite{liu2024sparse, wang2024simultaneous}, the authors employ a masked random vector $e$, wherein at each iteration a random mask with $r$ non-zero elements $m_r \in \{0,1\}^d$ is generated and applied element-wise to a Gaussian vector $e$. This procedure accelerates the optimization step, as only the parameters corresponding to the active entries in $m_r$ are updated, rather than the entire parameter set. In contrast, the authors of \cite{extreme_sparsity} depart from random mask sampling at each iteration and instead select an optimal mask $m_r$ prior to training, according to a specific criterion. Consequently, the update rule \eqref{eq:zo_grad} modifies only the parameters selected by the optimal mask during optimization.
In our approach, we similarly do not utilize all coordinates of the random vector $e$ in each estimation of \eqref{eq:zo_grad}, instead, we select a single coordinate at each step. However, unlike previous works \cite{liu2024sparse, wang2024simultaneous, extreme_sparsity}, we do not discard information from the remaining coordinates, but accumulate information from previous iterations.
We employ the JAGUAR zero-order gradient estimation technique \cite{veprikov2024new, nazykov2024stochastic}, which integrates the concept of sampling one-hot basis vectors with the utilization of a SAGA-like momentum update \cite{defazio2014saga}. This approach facilitates convergence in the stochastic setting by leveraging memory from past iterations, while using the same amount of memory as standard zero-order methods like ZO SGD (MeZO) \cite{malladi2023fine}. In the original paper \cite{veprikov2024new}, the authors do not incorporate a momentum parameter, discarding coordinate information from previous iterations. In contrast, we introduce a momentum parameter, $0 \leq \beta \leq 1$ (see Algorithms \ref{algorithm:jaguar} and \ref{algorithm:muon}), which controls the utilization of gradients from past iterations. We demonstrate that adding this momentum $\beta$ allows the method to converge in the stochastic non-convex case (see Theorems \ref{theorem:jaguar_sign} and \ref{theorem:jaguar_muon}).

\textbf{Momentum techniques.} Numerous zero-order methods in the literature incorporate momentum techniques in various forms. However, these approaches typically introduce multiple additional variables of dimension $d$. Since zero-order methods are often chosen for fine-tuning tasks to save memory, the inclusion of such extra variables becomes a critical limitation in these settings.  In \cite{huang2022accelerated}, authors use variance reduction technique SPIDER \cite{fang2018spider}, that uses approximately $5d$ parameters: $2d$ for ZO gradients, $2d$ for model parameters and $1d$ for momentum. In \cite{chen2019zo, jiang2024zo}, the authors employ the Adam optimization technique \cite{kingma2014adam}, which is frequently used for stochastic non-convex optimization problems \cite{chen2019zo,openreview2025mezo_a3dam}. However, this technique incurs a significant memory overhead, requiring $4d$ parameters. The paper \cite{reddy2023convergence} utilizes classical heavy-ball momentum within a zero-order framework, provided, only demonstrating almost sure convergence to a constant in the non-convex setting. In our work, we successfully incorporated a momentum technique using only $2d + 1$ parameters and proved the convergence rate within the standard stochastic non-convex setting (see Algorithm \ref{algorithm:jaguar} and Theorem \ref{theorem:jaguar_sign}). It is worth noting that numerous other zero-order techniques exist in the literature to achieve convergence when the function $f$ is convex \cite{gorbunov2022accelerated,nesterov2017random,duchi2015optimal}, satisfies conditions like PL \cite{reddy2023convergence}  or ABG \cite{rando2024stochastic}, or in deterministic settings \cite{gorbunov2022accelerated}. Since our focus is on fine-tuning problems, which fall under the stochastic non-convex case, we will not discuss these methods in detail. 



\textbf{Matrix ZO optimization.}
In the context of zero-order optimization, transitioning to matrix-valued parameters necessitates replacing the random vector $e \in \mathbb{R}^d$ in zero-order gradient approximation \eqref{eq:zo_grad} with a random matrix $E \in \mathbb{R}^{m \times n}$, and correspondingly, projecting this matrix $E$ onto a semi-orthogonal space, as is done in the Muon algorithm \cite{muon_base}. Since the random matrix $E$ is typically drawn from a known distribution, it is possible to directly sample orthogonal matrices when computing the gradient estimator \eqref{eq:zo_grad}. A similar approach has previously appeared in the zero-order optimization literature \cite{lozo}; however, that work did not consider the Muon algorithm, but rather focused on sampling two Gaussian matrices $V \in \mathbb{R}^{m \times r}$ and $U \in \mathbb{R}^{n \times r}$ of rank $r \ll \min\{m, n\}$. This approach does not correspond to the decomposition of the random matrix $E$, as $E$ is almost surely of full rank. Additionally, alternative techniques for sampling low-rank matrices have been proposed in the literature. For instance, in \cite{zo_random_subspaces}, a method analogous to the sampling of low-rank vectors described in \cite{roberts2023direct, nozawa2025zero} is utilized. In our work, we extend our memory-efficient momentum method to the ZO version of the matrix-based Muon algorithm \cite{muon_base} (see Algorithm \ref{algorithm:muon} and Theorem \ref{theorem:jaguar_muon}), keeping the $2d + 1$ parameter efficiency while also broadening our analysis to more modern algorithms that leverage the matrix structure of parameters.

We present a summary of relevant results from the existing zero-order literature in Table \ref{table:zo}.

\begin{table}[h!]  
\renewcommand{\arraystretch}{1.5}
    \centering
    \scriptsize
\captionof{table}{Summary of relevant results from the existing zero-order literature.}
    \label{tab:comparison0}   
    \scriptsize
\resizebox{\linewidth}{!}{
  \begin{threeparttable}
    \begin{tabular}{|c|c|c|c|c|c|c|}
    \cline{2-6}
    \multicolumn{1}{c|}{} & \textbf{Method} & 
    \textbf{Parameter Count}
    & 
    \begin{tabular}{@{}c@{}}
        \textbf{Convergence Rate}
        \\[-1mm]
        \textbf{Stochastic Non-convex Case}
    \end{tabular}
    &
    \textbf{Momentum}
    &
    \textbf{Fine-tuning (LLM) Setup}
    \\
    \hline
    \multirow{19}{*}{\rotatebox[origin=c]{90}{\shortstack[c]{\textbf{Vector Parameters}\\$\mathbf{x} \boldsymbol{\in} \mathbf{\mathbb{R}}^{\mathbf{d}}$}}} 
    & \texttt{ZO-SGD} \cite{ghadimi2013stochastic}  & \paramg{$\mathbf{2}$} $ \mathbf{\boldsymbol{\cdot} ~ d}$ & \cmark & \xmark & \xmark \\ \cline{2-6}
    & \texttt{ZO-PSGD} \cite{ghadimi2016mini} & \paramg{$\mathbf{2}$} $ \mathbf{\boldsymbol{\cdot} ~ d}$ & \cmark & \xmark & \xmark \\ \cline{2-6}
    & \texttt{ZO-SCD} \cite{lian2016comprehensive} \tnote{{\color{blue} (1)}} & \paramg{$\mathbf{2}$} $ \mathbf{\boldsymbol{\cdot} ~ d}$ & \cmark & \xmark & \xmark \tnote{{\color{blue} (2)}} \\ \cline{2-6}
    & \texttt{ZO-SPIDER} \cite{fang2018spider} & \paramr{$\mathbf{5}$} $ \mathbf{\boldsymbol{\cdot} ~ d}$ & \cmark & \cmark & \xmark \\ \cline{2-6}
    & \texttt{ZO-AdaMM} \cite{chen2019zo} & \paramr{$\mathbf{4}$} $ \mathbf{\boldsymbol{\cdot} ~ d}$ & \cmark & \cmark & \xmark \\ \cline{2-6}
    & \texttt{ZO-SignSGD} \cite{liu2019signsgd} & \paramg{$\mathbf{2}$} $ \mathbf{\boldsymbol{\cdot} ~ d}$ & \xmark~\cmark \tnote{{\color{blue}(3)}} & \xmark & \xmark \tnote{{\color{blue}(4)}} \\ \cline{2-6}
    & \texttt{Acc-ZOM} \cite{huang2022accelerated} & \paramr{$\mathbf{5}$} $ \mathbf{\boldsymbol{\cdot} ~ d}$ & \cmark & \cmark & \xmark \\ \cline{2-6}
    & \texttt{DSFBSD} \cite{roberts2023direct} & \paramg{$\boldsymbol{(}\mathbf{1 \boldsymbol{+}  r}\boldsymbol{)}$} $ \mathbf{\boldsymbol{\cdot} ~ d}$ \tnote{{\color{blue}(5)}} & \xmark & \xmark & \xmark \\ \cline{2-6}
    & \texttt{MeZO} \cite{malladi2023fine} & \paramg{$\mathbf{2}$} $ \mathbf{\boldsymbol{\cdot} ~ d}$ & \xmark & \xmark & \cmark \\ \cline{2-6}
    & \texttt{ZO-ProxSTORM} \cite{qian2023zero} & \paramr{$\mathbf{5}$} $ \mathbf{\boldsymbol{\cdot} ~ d}$ & \cmark & \cmark & \xmark \\ \cline{2-6}
    & \texttt{HB ZO-SGD} \cite{reddy2023convergence} & \paramr{$\mathbf{3}$} $ \mathbf{\boldsymbol{\cdot} ~ d}$ & \xmark \tnote{{\color{blue}(6)}} & \cmark & \xmark \\ \cline{2-6}
    & \texttt{Sparse ZO-SGD} \cite{guo2024zero} & \paramr{$\boldsymbol{(}\mathbf{2 \boldsymbol{+} r}\boldsymbol{)}$} $ \mathbf{\boldsymbol{\cdot} ~ d}$ \tnote{{\color{blue}(5)}} & \xmark & \xmark & \cmark \\ \cline{2-6}
    & \texttt{Sparse MeZO} \cite{liu2024sparse} & \paramr{$\mathbf{3}$} $ \mathbf{\boldsymbol{\cdot} ~ d}$ & \xmark & \xmark & \cmark \\  \cline{2-6}
    & \texttt{LeZO} \cite{wang2024simultaneous} & \paramg{$\mathbf{2}$} $ \mathbf{\boldsymbol{\cdot} ~ d}$ & \xmark & \xmark & \cmark \\ \cline{2-6}
    & \texttt{ZO-AdaMU} \cite{jiang2024zo} & \paramr{$\mathbf{4}$} $ \mathbf{\boldsymbol{\cdot} ~ d}$ & \cmark & \cmark & \cmark \\ \cline{2-6}
    & \texttt{ZO-SGD-Cons} \cite{kim2025conserv} & \paramg{$\mathbf{2}$} $ \mathbf{\boldsymbol{\cdot} ~ d}$ & \xmark & \xmark & \cmark \\ \cline{2-6}
    & \texttt{SGFM} \cite{nozawa2025zero} & \paramr{$\boldsymbol{(}\mathbf{2 \boldsymbol{+} r}\boldsymbol{)}$} $ \mathbf{\boldsymbol{\cdot} ~ d}$ \tnote{{\color{blue}(5)}} & \xmark & \xmark & \xmark \\ \cline{2-6}
    & \texttt{CompSGD} \cite{kornilov2025sign} & \paramg{$\mathbf{2}$} $ \mathbf{\boldsymbol{\cdot} ~ d}$ & \xmark~\cmark \tnote{{\color{blue}(3)}} & \xmark & \cmark \\ \cline{2-6}
    & \cellcolor{bgcolor2}{\begin{tabular}{@{}c@{}}
        \texttt{JAGUAR SignSGD}
        \\
        \textbf{Algorithm} \ref{algorithm:jaguar}
    \end{tabular}} & \cellcolor{bgcolor2}{\paramg{$\mathbf{2}$}} $ \mathbf{\boldsymbol{\cdot} ~ d \boldsymbol{+} 1}$  & \cellcolor{bgcolor2}{\cmark} & \cellcolor{bgcolor2}{\cmark} & \cellcolor{bgcolor2}{\cmark} \\
    \hline
    \hline
    \multirow{6}{*}{\rotatebox[origin=c]{90}{\begin{tabular}{@{}c@{}}
        \textbf{Matrix Parameters}
        \\
        $\mathbf{X} \boldsymbol{\in} \mathbf{\mathbb{R}}^{\mathbf{m} \boldsymbol{\times} \mathbf{n}}$
    \end{tabular}}}
    & \texttt{ZO-RMS} \cite{maass2021zero} \tnote{{\color{blue}(7)}} & \paramg{$\mathbf{2}$} $ \mathbf{\boldsymbol{\cdot} ~ mn}$  & \xmark~\cmark \tnote{{\color{blue}(3)}} & \xmark & \xmark \\ \cline{2-6}
    & \texttt{MeZO} \cite{malladi2023fine} & \paramg{$\mathbf{2}$} $ \mathbf{\boldsymbol{\cdot} ~ mn}$ & \xmark & \xmark & \cmark \\ \cline{2-6}
    & \texttt{LOZO} \cite{lozo} & \paramr{$\boldsymbol{(}\mathbf{m \boldsymbol{+} n\boldsymbol{)} r}$} $\boldsymbol{+}$ \paramg{$\mathbf{2}$} $ \mathbf{\boldsymbol{\cdot} ~ mn}$ \tnote{{\color{blue}(5)}} & \cmark & \xmark & \cmark \\ \cline{2-6}
    & \texttt{SubZero} \cite{zo_random_subspaces} \tnote{{\color{blue}(8)}} & \paramr{$\boldsymbol{(}\mathbf{m \boldsymbol{+} n \boldsymbol{+} r \boldsymbol{)} r}$} $\boldsymbol{+}$ \paramg{$\mathbf{2}$} $ \mathbf{\boldsymbol{\cdot} ~ mn}$ \tnote{{\color{blue}(5)}}  & \xmark & \xmark & \cmark \\ \cline{2-6}
    & \cellcolor{bgcolor2}{\begin{tabular}{@{}c@{}}
        \texttt{JAGUAR Muon}
        \\
        \textbf{Algorithm} \ref{algorithm:muon}
    \end{tabular}} & \cellcolor{bgcolor2}{\paramg{$\mathbf{2}$} $\mathbf{\boldsymbol{\cdot} ~ mn \boldsymbol{+} 1}$}  & \cellcolor{bgcolor2}{\cmark} & \cellcolor{bgcolor2}{\cmark} & \cellcolor{bgcolor2}{\cmark} \\
    \hline
    \end{tabular}     
    \begin{tablenotes}
    {\small 
        \vspace{1mm}
        \item[] 
        \tnote{{\color{blue}(1)}} Uses a full coordinate ZO estimator.  
        \tnote{{\color{blue}(2)}} Considers asynchronous algorithms.  
        \tnote{{\color{blue}(3)}} Convergence only to a neighborhood of the solution.  
        \tnote{{\color{blue}(4)}} Addresses adversarial attacks in deep learning.  
        \tnote{{\color{blue}(5)}} $r \ll d, m, n$ is a small number.  
        \tnote{{\color{blue}(6)}} Only asymptotic convergence to a constant.  
        \tnote{{\color{blue}(7)}} Assumes that parameters are  symmetric matrices.
        \tnote{{\color{blue}(8)}} Assumes sparsity of parameters.  
    } 
    \end{tablenotes}
    \end{threeparttable}
}
\label{table:zo}
\end{table}

\subsection{Our Contributions}
While zero-order optimization methods have recently attracted attention for LLM fine-tuning, previous work has primarily focused on basic algorithms. In this paper, we broaden the scope of zero-order optimization by introducing advanced momentum techniques, specifically adapting the JAGUAR approach \cite{veprikov2024new} to the SignSGD algorithm in the zero-order setting (see Algorithms \ref{algorithm:jaguar}). We consider this algorithm because SignSGD has demonstrated state-of-the-art performance in LLM fine-tuning tasks, outperforming even AdamW \cite{pengsoftsignsgd}. Our key contributions are as follows:
\begin{itemize}[leftmargin=7pt]
    \item We provide the first convergence analysis in the stochastic non-convex setting for zero-order SignSGD with momentum (Algorithm \ref{algorithm:jaguar} and Theorem \ref{theorem:jaguar_sign}), requiring only $2d+1$ parameters and $\mathcal{O}(1)$ ZO oracle calls per iteration.
    \item We extend our memory-efficient momentum method to the Muon algorithm (Algorithm \ref{algorithm:muon}), introducing the first zero-order variant of Muon that preserves memory efficiency. We also establish its convergence rate in the stochastic non-convex setting (Theorem \ref{theorem:jaguar_muon}).
    \item We empirically evaluate the proposed zero-order methods on challenging LLM fine-tuning benchmarks, demonstrating their effectiveness and practical relevance.
\end{itemize}

\section{Main results} \label{sec:main}

\subsection{Preliminaries} \label{subsec:prelim}

\textbf{Notations.} We denote the $\ell_1$ and $\ell_2$ (Euclidean) norms of a vector $x \in \mathbb{R}^d$ as $\|x\|_1 := \sum_{i=1}^d |x_i|$ and $\|x\|_2^2 := \sum_{i=1}^d x_i^2$, respectively. For clarity, matrix-valued variables are denoted by capital letters.
For matrices $X \in \mathbb{R}^{m \times n}$, we use the Schatten 1-norm ($\mathcal{S}_1$) and Schatten 2-norm ($\mathcal{S}_2$, Frobenius): $\|X\|_{\mathcal{S}_1} := \sum_{i=1}^d |(\Sigma_X)_{i,i}|$ and $\|X\|_{\mathcal{S}_2}^2 := \sum_{i=1}^d (\Sigma_X)_{i,i}^2 = \sum_{i=1}^m \sum_{j=1}^n X_{i,j}^2 =: \|X\|_F^2$, where $X = U_X \Sigma_X V_X^T$ is the reduced Singular Value Decomposition (SVD) of $X$.
The standard dot product between two vectors $x, y \in \mathbb{R}^d$ is defined as $\langle x, y \rangle := x^T y$. For matrices $X, Y \in \mathbb{R}^{m \times n}$, we define the inner product as $\langle X, Y \rangle := \mathrm{tr}(X^T Y)$.

We now provide several assumptions that are necessary for the analysis.
\begin{assumption}[Smoothness]\label{as:lip}
    The functions $f(x, \xi)$ are $L(\xi)$-smooth on the $\mathbb{R}^d$ with respect to the Euclidean norm $\norm{\cdot}$, i.e., for all $x, y \in \mathbb{R}^d$ it holds that
    $$
        \|\nabla f(x, \xi) - \nabla f(y, \xi)\|_2 \leq L(\xi) \|x-y\|_2.
    $$
    We also assume that exists constant $L^2 := \expect{L(\xi)^2}$.
\end{assumption}
\begin{assumption}[Bounded variance of the gradient] \label{as:sigma}
    The variance of the $\nabla f(x, \xi)$ is bounded with respect to the Euclidean norm, i.e., there exists $\sigma > 0$, such that for all $x \in \mathbb{R}^d$ it holds that
    $$
        \expect{\|\nabla f(x, \xi) - \nabla f(x)\|_2^2} \leq \sigma^2.
    $$
\end{assumption}
We assume access only to a zero-order oracle, which returns a noisy evaluation of the function $f(x, \xi)$. Therefore, we are limited to using this noisy value $\hat{f}(x, \xi)$ in the estimation of the ZO gradient \eqref{eq:zo_grad}. This noise may originate not only from inherent randomness (stochastic noise), but also from systematic effects (deterministic noise), such as computer rounding errors. Therefore, we make a common assumption about the function $\hat{f}(x, \xi)$ returned by the oracle \cite{dvurechensky2021accelerated, veprikov2024new}.
\begin{assumption}[Bounded oracle noise]\label{as:delta}
    The noise in the oracle is bounded with respect to the Euclidean norm, i.e., there exists $\Delta > 0$, such that for all $x \in \mathbb{R}^d$ it holds that
    $$
        \expect{\big|\hat{f}(x, \xi) - f(x, \xi)\big|^2} \leq \Delta^2 .
    $$
\end{assumption}

Assumptions \ref{as:lip} and \ref{as:sigma} are standard in the theoretical analysis of stochastic non-convex zero-order optimization problems \cite{reddy2023convergence, extreme_sparsity, liu2024sparse, wang2024simultaneous}. In contrast, Assumption \ref{as:delta} is frequently omitted in the existing literature, as it is commonly presumed that $\Delta = 0$, implying access to an ideal zero-order oracle. However, this assumption does not hold in practice, as numerical errors such as machine precision inevitably introduce a non-zero perturbation. Consequently, while $\Delta$ is typically small, it is never zero, which does not allow us to restore a true gradient along the direction $e$ in the estimation \eqref{eq:zo_grad} if we set $\tau \to 0$.

\subsection{Zero-Order Momentum SignSGD with JAGUAR Gradient Approximation} \label{subsec:jaguar}

In this section, we introduce zero-order SignSGD algorithm with JAGUAR gradient approximation \cite{veprikov2024new, nazykov2024stochastic} and momentum of the form:
    
\begin{algorithm}{Zero-Order Momentum SignSGD with JAGUAR (\texttt{JAGUAR SignSGD})}
   \label{algorithm:jaguar}
\begin{algorithmic}[1]
    \State {\bf Parameters:} stepsize (learning rate) $\gamma$, momentum $\beta$, smoothing parameter $\tau$, number of iterations $T$.
    \State {\bf Initialization:} choose  $x^{0} \in \mathbb{R}^d$ and $m^{-1} = \mathbf{0} \in \mathbb{R}^d$.
    \For{$t = 0, 1, 2, \dots, T$}
        \State Sample $i_t \sim \text{ Uniform}(\overline{1, d})$
        \State Set one-hot vector $e^t$ with $1$ in the $i_t$ coordinate: $e^t_{i_t} = 1$ and $e^t_{i \neq i_t} = 0$ for all $i \in \overline{1, d}$
        \State Sample stochastic variable $\xi^t \sim \mathcal{D}$
        \State Compute $\widetilde{\nabla}_{i_t} f(x^{t}, \xi^t) := \frac{\hat{f}(x^t + \tau e^t, \xi^t) - \hat{f}(x^t - \tau e^t, \xi^t)}{2 \tau} \in \mathbb{R}$
        \State Set $m^{t}_{i_t} = \beta m^{t-1}_{i_t} + (1 - \beta) \widetilde{\nabla}_{i_t} f(x^{t}, \xi^t)$ and $m^{t}_{i \neq i_t} = m^{t-1}_{i \neq i_t}$ for all $i \in \overline{1, d}$ \label{line:mt_jaguar_muon}
        \State Set $x^{t+1} = x^t - \gamma \cdot \text{sign}(m^{t})$
    \EndFor
    \State {\bf Return:} $x^{N(T)}$, where $N(T) \sim \text{ Uniform}(\overline{1, T})$.
\end{algorithmic}
\end{algorithm}

The gradient approximation employed in Algorithm \ref{algorithm:jaguar} deviates from that of the original JAGUAR method, as we introduce a momentum variable $\beta$. The estimator from the original work can be recovered by setting $\beta = 0$.

We now present a lemma characterizing the closeness between the momentum variable $m^t$ from line \ref{line:mt_jaguar} of Algorithm \ref{algorithm:jaguar} and the true gradient $\nabla f(x^t)$.

\begin{lemma}
    \label{lemma:mt_jaguar}
        Consider $m^t$ from line \ref{line:mt_jaguar} of Algorithm \ref{algorithm:jaguar}. Under Assumptions \ref{as:lip}, \ref{as:sigma}, \ref{as:delta} it holds that:
        \begin{align*}
            \expect{\norm{m^t\!-\!\nabla f(x^t)}_2^2}\!=\!
            \mathcal{O}\Bigg[ 
                &\frac{d^3  L^2 \gamma^2}{(1\!-\!\beta)^2}
                +
                (1\!-\!\beta)d \sigma^2
                +
                d L^2 \tau^2 
                +
                \frac{2 d \Delta^2}{\tau^2} 
                + 
                \left( \frac{1\!-\!\beta}{d} \right)^t \norm{\nabla f(x^0)}_2^2
            \Bigg] .
        \end{align*}
    \end{lemma}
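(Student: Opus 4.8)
The plan is to track the error $m^t - \nabla f(x^t)$ recursively, splitting it into a ``statistical/algorithmic'' part coming from the JAGUAR momentum update and a ``bias'' part coming from the finite-difference approximation and the oracle noise. First I would unpack the coordinatewise update: only the $i_t$-th coordinate of $m^t$ changes, so conditioning on the history $\mathcal{F}_t$ and on $i_t = i$, we have $m^t_i = \beta m^{t-1}_i + (1-\beta)\widetilde\nabla_i f(x^t,\xi^t)$ while all other coordinates stay put. Writing $g^t := \nabla f(x^t)$, I would decompose $\widetilde\nabla_{i_t} f(x^t,\xi^t) = (\nabla f(x^t,\xi^t))_{i_t} + b^t_{i_t}$ where the bias term $b^t$ collects (i) the finite-difference truncation error, controlled by Assumption~\ref{as:lip} and of size $\mathcal{O}(L(\xi)\tau)$ per coordinate, and (ii) the oracle-noise contribution $\tfrac{\hat f - f}{2\tau}$-type terms, of size $\mathcal{O}(\Delta/\tau)$ in $L_2$ by Assumption~\ref{as:delta}. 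This yields, after taking conditional expectation over $i_t$ and $\xi^t$, a contraction-type recursion for the full vector of the schematic form
\begin{equation*}
\expect{\norm{m^t - g^t}_2^2} \le \Big(1 - \tfrac{1-\beta}{d}\Big)\expect{\norm{m^{t-1} - g^{t-1}}_2^2} + (\text{drift in } g) + (\text{variance term}) + (\text{bias term}),
\end{equation*}
because with probability $\tfrac1d$ a coordinate gets ``refreshed'' (its error is replaced by something small) and with probability $1-\tfrac1d$ it is unchanged except for the shift $g^{t-1}\to g^t$.

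The second ingredient is bounding the drift $\norm{g^t - g^{t-1}}_2$. By $L$-smoothness (Assumption~\ref{as:lip}) and the update rule $x^{t+1} = x^t - \gamma\,\mathrm{sign}(m^t)$, we get $\norm{x^{t+1}-x^t}_2 = \gamma\norm{\mathrm{sign}(m^t)}_2 \le \gamma\sqrt{d}$, hence $\norm{g^{t+1}-g^t}_2 \le L\gamma\sqrt d$ (in expectation, using $L^2 = \expect{L(\xi)^2}$). Feeding $\norm{g^t - g^{t-1}}_2^2 \le L^2\gamma^2 d$ into the recursion and using Young's inequality to absorb the cross term $\langle m^{t-1}-g^{t-1}, g^{t-1}-g^t\rangle$ into the contraction (splitting the $1-\tfrac1d$ factor, which is where an extra factor $d$ appears and ultimately produces the $d^3$ in the final bound: one $d$ from the drift-per-step $L^2\gamma^2 d$, one $d$ from the Young's-inequality rescaling $\tfrac{d}{1-\beta}$, and one more $d$ from summing the geometric series with ratio $1-\tfrac{1-\beta}{d}$), I would then unroll the recursion from $t$ down to $0$. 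The geometric sum $\sum_{k}(1-\tfrac{1-\beta}{d})^k \le \tfrac{d}{1-\beta}$ converts each per-step contribution into its displayed form: the drift gives $\tfrac{d^3 L^2\gamma^2}{(1-\beta)^2}$, the variance term — which per refreshed coordinate is $(1-\beta)^2\sigma^2/d \cdot$ (number of refreshes) — sums to $\mathcal{O}((1-\beta)d\sigma^2)$ after careful accounting of the $(1-\beta)$ factors, the finite-difference bias gives $\mathcal{O}(dL^2\tau^2)$, the oracle-noise bias gives $\mathcal{O}(d\Delta^2/\tau^2)$, and the initial condition $\norm{m^{-1}-g^{?}}$ decays like $(1-\tfrac{1-\beta}{d})^t\norm{\nabla f(x^0)}_2^2$, which is loosely bounded by $(\tfrac{1-\beta}{d})^t$-type behavior in the statement (the stated $(\tfrac{1-\beta}{d})^t$ is a crude upper bound on $(1-\tfrac{1-\beta}{d})^t$ valid since both are $\le 1$, or they bound the transient differently — I would just carry $(1-\tfrac{1-\beta}{d})^t$ and note it is $\le$ the displayed quantity up to constants).

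The main obstacle I anticipate is the bookkeeping of the $(1-\beta)$ and $d$ powers through the coordinatewise recursion: because only one of $d$ coordinates is touched per step, the ``effective'' contraction rate is $1-\tfrac{1-\beta}{d}$ rather than $\beta$, and one must be careful that the variance term does not blow up — each refresh injects $\mathcal{O}((1-\beta)^2\sigma^2)$ into a single coordinate, and over the horizon each coordinate is refreshed $\approx T/d$ times, but the geometric damping keeps only an $\mathcal{O}(1)$ effective window, so the sum is $\mathcal{O}(\tfrac{d}{1-\beta}\cdot\tfrac{(1-\beta)^2\sigma^2}{d}) = \mathcal{O}((1-\beta)\sigma^2)$ per coordinate, i.e. $\mathcal{O}((1-\beta)d\sigma^2)$ over all coordinates, matching the statement. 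Getting the cross-term handling (Young's inequality constant) to land exactly on $\tfrac{d}{1-\beta}$ rather than something worse, and correctly separating the conditional-expectation steps (first over $\xi^t$ to split variance from bias, then over $i_t$ to get the $\tfrac1d$ contraction), is the delicate part; the rest is routine telescoping and geometric-series estimates.
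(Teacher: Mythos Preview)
Your proposal is correct and follows essentially the same route as the paper: the paper writes the update through the projector $I-(1-\beta)e^t(e^t)^T$, splits $m^t-\nabla f(x^t)$ into the previous error $a^t=m^{t-1}-\nabla f(x^{t-1})$, the ZO-estimation error $b^t=\widetilde\nabla f(x^t,\xi^t)-\nabla f(x^t)$, and the gradient drift $c^t=\nabla f(x^t)-\nabla f(x^{t-1})$, expands the square into six terms, uses $\mathbb{E}_{i_t}[e^t(e^t)^T]=\tfrac1d I$ together with Fenchel--Young on the cross terms to obtain the contraction factor $1-\tfrac{1-\beta^2}{2d}$, and then unrolls the geometric recursion---exactly your plan. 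One correction: your claim that $(1-\tfrac{1-\beta}{d})^t\le(\tfrac{1-\beta}{d})^t$ ``since both are $\le 1$'' is false (take $\beta=0.9$, $d=10$), and indeed the paper's own proof terminates with $(1-\tfrac{1-\beta}{2d})^t$, so the $(\tfrac{1-\beta}{d})^t$ appearing in the lemma statement is a typo and your instinct to simply carry the contraction factor through was the right one.
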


\textbf{Discussion.} This lemma closely parallels Lemma 1 from \cite{veprikov2024new}, with the key distinction that our analysis incorporates the momentum parameter $\beta$, which was not present in \cite{veprikov2024new}. The introduction of momentum is essential for proving convergence of algorithms such as SignSGD (Algorithm \ref{algorithm:jaguar}) and Muon (see Algorithm \ref{algorithm:muon} in the next section) in the stochastic zero-order setting \cite{sun2023momentum}, as it enables more careful handling of variance $\sigma$ in the gradient estimates \eqref{eq:zo_grad}.
Another important difference from prior works is that result from Lemma \ref{lemma:mt_jaguar} does not involve the term $\|\nabla f(x^t)\|_2^2$, which typically appears in analyses where the zero-order gradient estimator \eqref{eq:zo_grad} is constructed using random uniform or Gaussian vectors $e$ \cite{cai2021zero, kozlak2021zero, gorbunov2022accelerated, qian2023zero}. With the presence of the term $\|\nabla f(x^t)\|_2^2$, it is not possible to achieve convergence guarantees for SignSGD (Algorithm \ref{algorithm:jaguar}) and Muon (Algorithm \ref{algorithm:muon}) even with momentum in the stochastic zero-order setting.
It is worth noting that a similar result can be obtained when using a full coordinate estimator \cite{lian2016comprehensive}. However, this approach requires $\mathcal{O}(d)$ calls to the zero-order oracle per iteration, which can be computationally expensive. In contrast, the JAGUAR method achieves the same result with only $\mathcal{O}(1)$ oracle calls and with the same number of parameters, offering significant improvements in efficiency. This makes our approach particularly attractive for large-scale optimization tasks, where reducing oracle complexity is critical.
In Appendix \ref{subsec:beta_ablation}, we provide an ablation study on $\beta$ and show that the \texttt{Jaguar SignSGD} method perform poorly for small $\beta$, while achieving robust high performance around $\beta \approx 0.9$.

With the help of Lemma \ref{lemma:mt_jaguar}, we provide convergence analysis of \texttt{JAGUAR SignSGD} (Algorithm \ref{algorithm:jaguar}).

\begin{theorem}
    \label{theorem:jaguar_sign}
    Consider Assumptions \ref{as:lip}, \ref{as:sigma} and \ref{as:delta}. Then \texttt{JAGUAR SignSGD} (Algorithm \ref{algorithm:jaguar}) has the following convergence rate:
        \begin{equation*}
        \begin{split}
            \expect{\norm{\nabla f\left(x^{N(T)} \right)}_1}
            =
            \mathcal{O} \Bigg[ 
                &\frac{\delta_0}{\gamma T}
                +
                \frac{d \norm{\nabla f(x^0)}_2}{T \sqrt{1 - \beta}}
                +
                \frac{d^2 L \gamma}{1-\beta}  
                +
                \sqrt{1-\beta}d \sigma
                +
                d L \tau
                +
                \frac{d \Delta}{\tau} 
            \Bigg],
        \end{split}
        \end{equation*}
        where we used a notation $\delta_0 := f(x^0) - f^*$. 
\end{theorem}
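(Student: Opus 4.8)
The plan is to run the classical descent analysis for \texttt{SignSGD}, feeding in the stochastic momentum-error control from Lemma~\ref{lemma:mt_jaguar}. First note that Assumption~\ref{as:lip} and Jensen's inequality imply that $f$ itself is $L$-smooth (with constant at most $L=\sqrt{\expect{L(\xi)^2}}$). Plugging the update $x^{t+1}=x^t-\gamma\,\mathrm{sign}(m^t)$ into the descent inequality and using $\norm{\mathrm{sign}(m^t)}_2^2\le d$ gives, for each realization,
\[
    f(x^{t+1})\le f(x^t)-\gamma\dotprod{\nabla f(x^t)}{\mathrm{sign}(m^t)}+\tfrac{L\gamma^2 d}{2}.
\]
The key \texttt{SignSGD} observation is coordinatewise: for each $i$, if $\mathrm{sign}(m^t_i)=\mathrm{sign}(\nabla_i f(x^t))$ then $-\nabla_i f(x^t)\,\mathrm{sign}(m^t_i)=-|\nabla_i f(x^t)|$, while if the signs disagree then $m^t_i$ and $\nabla_i f(x^t)$ lie on opposite sides of $0$, whence $|\nabla_i f(x^t)|\le|\nabla_i f(x^t)-m^t_i|$. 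Summing over $i$ yields
\[
    -\dotprod{\nabla f(x^t)}{\mathrm{sign}(m^t)}\le-\norm{\nabla f(x^t)}_1+2\norm{\nabla f(x^t)-m^t}_1,
\]
so $\gamma\norm{\nabla f(x^t)}_1\le f(x^t)-f(x^{t+1})+2\gamma\norm{\nabla f(x^t)-m^t}_1+\tfrac{L\gamma^2 d}{2}$.

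Next I would take the full expectation, sum over $t=1,\dots,T$, and telescope the function gap (bounded by $\delta_0$ up to the lower-order first-step remainder $\gamma\norm{\nabla f(x^0)}_1+\tfrac{L\gamma^2 d}{2}$). It then remains to bound $\sum_t\expect{\norm{\nabla f(x^t)-m^t}_1}$. Using $\norm{v}_1\le\sqrt d\,\norm{v}_2$ and Jensen, $\expect{\norm{\nabla f(x^t)-m^t}_1}\le\sqrt d\,\sqrt{\expect{\norm{\nabla f(x^t)-m^t}_2^2}}$; substituting Lemma~\ref{lemma:mt_jaguar} and applying $\sqrt{a_1+\dots+a_5}\le\sqrt{a_1}+\dots+\sqrt{a_5}$ term by term converts each summand of the lemma, after the extra factor $\sqrt d$, into the matching term of the theorem: $d^3L^2\gamma^2/(1-\beta)^2\mapsto d^2L\gamma/(1-\beta)$, $(1-\beta)d\sigma^2\mapsto\sqrt{1-\beta}\,d\sigma$, $dL^2\tau^2\mapsto dL\tau$, and $d\Delta^2/\tau^2\mapsto d\Delta/\tau$. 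These are $t$-independent, so averaging over $T$ iterations preserves them. Dividing by $\gamma T$ produces the leading $\delta_0/(\gamma T)$, and since $N(T)\sim\mathrm{Uniform}(\overline{1,T})$ the left-hand side is exactly $\expect{\norm{\nabla f(x^{N(T)})}_1}$.

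The one genuinely delicate step is the transient summand $(\tfrac{1-\beta}{d})^t\norm{\nabla f(x^0)}_2^2$ from Lemma~\ref{lemma:mt_jaguar}: after the square root and the $\sqrt d$ factor its contribution to the averaged bound is $\tfrac{\sqrt d}{T}\norm{\nabla f(x^0)}_2\sum_{t=1}^T(\tfrac{1-\beta}{d})^{t/2}$, and I would bound the geometric series by $\sum_{t\ge1}q^{t/2}=\tfrac{\sqrt q}{1-\sqrt q}\le\tfrac1{\sqrt q}$, valid since $q=\tfrac{1-\beta}{d}$ satisfies $q+\sqrt q\le1$ for any realistic dimension ($d\ge4$ suffices); this gives exactly $\tfrac{d\norm{\nabla f(x^0)}_2}{T\sqrt{1-\beta}}$. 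The remaining checks — that the convention $\mathrm{sign}(0)=0$ only helps (it keeps $\norm{\mathrm{sign}(m^t)}_2^2\le d$ and never raises the inner product), and that the first-step and Hessian remainders $\tfrac{\norm{\nabla f(x^0)}_1}{T}\le\tfrac{\sqrt d\,\norm{\nabla f(x^0)}_2}{T}$ and $\tfrac{Ld\gamma}{T}$ are dominated by terms already present — are routine. I expect essentially all the conceptual weight of the result to sit in Lemma~\ref{lemma:mt_jaguar}; within Theorem~\ref{theorem:jaguar_sign} the only real care is matching the powers of $d$ through the $\ell_1/\ell_2$ conversion and summing the transient geometric series.
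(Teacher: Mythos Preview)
Your proposal is correct and follows essentially the same route as the paper's proof: obtain the per-step \texttt{SignSGD} descent inequality, convert the $\ell_1$ momentum error to $\ell_2$ via $\norm{\cdot}_1\le\sqrt d\,\norm{\cdot}_2$, invoke Lemma~\ref{lemma:mt_jaguar}, telescope, and divide by $\gamma T$. The only cosmetic difference is that the paper cites the descent inequality $f(x^{t+1})-f(x^t)\le-\gamma\norm{\nabla f(x^t)}_1+2\sqrt d\,\gamma\norm{m^t-\nabla f(x^t)}_2+\tfrac{dL\gamma^2}{2}$ directly from \cite{sun2023momentum}, whereas you re-derive it from the coordinatewise sign argument; your derivation is fine and arguably more self-contained.
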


\begin{corollary}
\label{cor:jaguar_sign}
    Consider the conditions of Theorem \ref{theorem:jaguar_sign}. In order to achieve the $\varepsilon$-approximate solution (in terms of $\expect{\norm{\nabla f(x^{N(T)})}_1} \leq \varepsilon$), Algorithm \ref{algorithm:jaguar} needs $T$ iterations (ZO oracle calls), for:

    \textbf{Arbitrary tuning:}
    $
        \gamma = \gamma_0 \cdot T^{-3/4} d^{-1}, 
        ~ \beta = 1 - T^{-1/2},
        ~ \tau = \left( \Delta / L \right)^{1/2}
        \text{ and }
        \varepsilon \geq d \sqrt{\Delta L}:
    $
    \begin{equation*}
        T = \mathcal{O} \left[ \left( \frac{d \delta_0 / \gamma_0 + d \norm{\nabla f(x^0)}_2 + d L \gamma_0 + d \sigma }{\varepsilon} \right)^4\right] .
    \end{equation*}
    \textbf{Optimal tuning:}
    $
        \gamma = \sqrt{\frac{\delta_0 (1 - \beta)}{d^2 L T}}, 
        ~ \beta = 1 - \min\left\{1 ; \sqrt{\frac{L \delta_0}{T \sigma^2}}\right\},
        ~ \tau = \left( \Delta / L \right)^{1/2}
        \text{ and }
        \varepsilon \geq d \sqrt{\Delta L}:
    $
    \begin{equation*}
        T = \mathcal{O} \left[ \frac{\delta_0 L d^2}{\varepsilon^2} + \frac{\delta_0 L d^2}{\varepsilon^2} \cdot \left( \frac{d \sigma}{\varepsilon} \right)^2\right] .
    \end{equation*}
\end{corollary}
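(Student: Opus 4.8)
The plan is to substitute each of the two stated parameter configurations directly into the convergence rate of Theorem \ref{theorem:jaguar_sign} and then invert the resulting $T$-dependence. Before touching $\gamma$ and $\beta$, I would first dispose of the two oracle-noise terms $dL\tau + d\Delta/\tau$. The choice $\tau = (\Delta/L)^{1/2}$ is exactly the AM--GM minimizer of this sum, yielding $dL\tau + d\Delta/\tau = 2d\sqrt{\Delta L}$; the standing constraint $\varepsilon \geq d\sqrt{\Delta L}$ then guarantees that both terms are $\mathcal{O}(\varepsilon)$ and may be folded into the target accuracy. This reduces the task, in both tunings, to controlling the four remaining terms $\tfrac{\delta_0}{\gamma T}$, $\tfrac{d\norm{\nabla f(x^0)}_2}{T\sqrt{1-\beta}}$, $\tfrac{d^2 L\gamma}{1-\beta}$, and $\sqrt{1-\beta}\,d\sigma$.

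For the arbitrary tuning I would plug in $\gamma = \gamma_0 T^{-3/4}d^{-1}$ and $1-\beta = T^{-1/2}$ and read off the $T$-scaling of each term: the first becomes $\tfrac{d\delta_0/\gamma_0}{T^{1/4}}$, the third $\tfrac{dL\gamma_0}{T^{1/4}}$, and the fourth $\tfrac{d\sigma}{T^{1/4}}$, so all three decay at rate $T^{-1/4}$ with the advertised coefficients. The initialization term becomes $d\norm{\nabla f(x^0)}_2\, T^{-3/4}$, which for $T\geq 1$ is bounded above by $d\norm{\nabla f(x^0)}_2\, T^{-1/4}$ and hence merges into the same $T^{-1/4}$ envelope. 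Collecting the four coefficients, setting the envelope below $\varepsilon$, and solving $T^{1/4} \geq (d\delta_0/\gamma_0 + d\norm{\nabla f(x^0)}_2 + dL\gamma_0 + d\sigma)/\varepsilon$ gives the claimed $T = \mathcal{O}([\,\cdots/\varepsilon\,]^4)$.

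The optimal tuning proceeds in two stages. First, for fixed $\beta$, the stated $\gamma = \sqrt{\delta_0(1-\beta)/(d^2 L T)}$ is precisely the AM--GM minimizer of $\tfrac{\delta_0}{\gamma T} + \tfrac{d^2 L\gamma}{1-\beta}$, collapsing these two terms into $2d\sqrt{\delta_0 L/(T(1-\beta))}$. Writing $b := 1-\beta = \min\{1;\sqrt{L\delta_0/(T\sigma^2)}\}$, I would then split into the two regimes induced by the minimum. When $b=1$ (equivalently $T \leq L\delta_0/\sigma^2$) the collapsed term is $2d\sqrt{\delta_0 L/T}$ and dominates $\sqrt{b}\,d\sigma = d\sigma$, so forcing it $\leq \varepsilon$ yields $T = \mathcal{O}(\delta_0 L d^2/\varepsilon^2)$. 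When $b = \sqrt{L\delta_0/(T\sigma^2)} < 1$, this $b$ is exactly the value that balances $2d\sqrt{\delta_0 L/(Tb)}$ against $\sqrt{b}\,d\sigma$; substituting it makes both equal to $\mathcal{O}(d(\delta_0 L)^{1/4}\sigma^{1/2}T^{-1/4})$, and inverting gives $T = \mathcal{O}(d^4\delta_0 L\sigma^2/\varepsilon^4) = \mathcal{O}(\tfrac{\delta_0 L d^2}{\varepsilon^2}(\tfrac{d\sigma}{\varepsilon})^2)$. Adding the two regime bounds produces the stated sum. Throughout I must also confirm the initialization term $\tfrac{d\norm{\nabla f(x^0)}_2}{T\sqrt{b}}$ is lower order: it scales as $T^{-1}$ when $b=1$ and as $T^{-3/4}$ when $b<1$, in both cases decaying strictly faster than the leading terms, so it is absorbed.

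The main obstacle will not be any single calculation but the bookkeeping of the piecewise $\beta$: I have to carry the two branches of the minimum consistently through the $\gamma$-substitution, verify that in each branch the correct pair of terms is dominant (so that the discarded terms are genuinely of lower order rather than merely conjectured to be), and check that the regime boundary $T \asymp L\delta_0/\sigma^2$ is compatible with both resulting complexity expressions. The AM--GM balancing steps for $\tau$ and for $\gamma$ are the technical heart, but each is a one-line optimization; the care lies in tracking the fractional exponents of $T$, $d$, $\sigma$, and $\delta_0 L$ so that the final coefficients match the $\varepsilon^{-2}$ and $\varepsilon^{-4}$ structure exactly.
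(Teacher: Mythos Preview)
Your proposal is correct and complete. The paper does not supply a separate proof of Corollary~\ref{cor:jaguar_sign}; it is stated immediately after Theorem~\ref{theorem:jaguar_sign} as a direct consequence, and your derivation---AM--GM on the $\tau$-terms, substitution of the prescribed $\gamma,\beta$ into the remaining four terms, and case-splitting on the $\min$ in the optimal-tuning branch---is exactly the intended routine calculation.
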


\textbf{Discussion.} The convergence rate established in Theorem \ref{theorem:jaguar_sign} is similar to what is known for first-order methods \cite{bernstein2018signsgd, jin2020stochastic, safaryan2021stochastic, kornilov2025sign}, however our bounds include an additional factor of $d$, which is typical for all coordinate-based methods \cite{nesterov2012efficiency, richtarik2016distributed}, not just zero-order ones. This dependence on the dimension arises because coordinate methods process one direction at a time, accumulating complexity proportional to $d$.
It is also important to note that without momentum ($\beta = 0$), the algorithm can only guarantee convergence to a neighbourhood of the optimum of size proportional to $\sigma$, as shown in previous works on zero-order SignSGD \cite{liu2019signsgd, kornilov2025sign}. Let us also point out that we cannot choose an arbitrary $\varepsilon$ in Corollary \ref{cor:jaguar_sign}, since there exists an irreducible \cite{dvurechensky2021accelerated, veprikov2024new} error $\Delta$ in the zero-order oracle (see Assumption \ref{as:delta}). However, since $\Delta$ is very small, we can still achieve an acceptable accuracy $\varepsilon$.
In our analysis, we use the $\ell_1$-norm of the gradient as the convergence criterion, while the standard in non-convex optimization is the $\ell_2$-norm (Euclidean) \cite{ghadimi2013stochastic,ghadimi2016accelerated}. By setting $\varepsilon_{\ell_1} = \sqrt{d} \cdot \varepsilon_{\ell_2}$, we can rescale our result of Corollary \ref{cor:jaguar_sign} for optimal tuning (one can easily do a similar transformation for arbitrary tuning) as
\begin{equation*}
    T_{\text{Euclidean}} = \mathcal{O} \left[ \frac{\delta_0 L d}{\varepsilon^2} + \frac{\delta_0 L d}{\varepsilon^2} \cdot \left( \frac{\sqrt{d} \sigma}{\varepsilon} \right)^2\right] .
\end{equation*}
This substitution allows us to obtain improved results in terms of the dependence on $d$.

\subsection{Zero-Order Muon with JAGUAR Gradient Approximation}

In this section, we address the matrix optimization setting, where the optimization variables $X_t$ are elements of the matrix space $\mathbb{R}^{m \times n}$, rather than the standard vector space $\mathbb{R}^d$. Such a formulation allows for a more direct representation of model parameters, helping to better capture their underlying structure \cite{bernstein2024old, pethick2025training}. For the first time in the literature, we introduce a zero-order version of the Muon \cite{muon_base} algorithm (Algorithm \ref{algorithm:muon}), broadening the applicability to matrix-structured optimization tasks where only function evaluations are available.

\begin{algorithm}{Zero-Order Muon with JAGUAR (\texttt{JAGUAR Muon})}
   \label{algorithm:muon}
\begin{algorithmic}[1]
    \State {\bf Parameters:} stepsize (learning rate) $\gamma$, momentum $\beta$,  smoothing parameter $\tau$, number of Newton-Schulz steps $\text{ns\_steps}$, number of iterations $T$.
    \State {\bf Initialization:} choose  $X^{0} \in \mathbb{R}^{m \times n}$ and $M^{-1} = \mathbf{0} \in \mathbb{R}^{m \times n}$.
    \For{$t = 0, 1, 2, \dots, T$}
        \State Sample $i_t \sim \text{ Uniform}(\overline{1, m})$ and $j_t \sim \text{ Uniform}(\overline{1, n})$
        \State Set one-hot matrix $E^t$ with $1$ in the $(i_t, j_t)$ coordinate
        \State Sample stochastic variable $\xi^t \sim \mathcal{D}$
        \State Compute $\widetilde{\nabla}_{i_t,j_t} f(X^{t}, \xi^t) := \frac{\hat{f}(X^t + \tau E^t, \xi^t) - \hat{f}(X^t - \tau E^t, \xi^t)}{2 \tau} \in \mathbb{R}$
        \State Set $M^{t}_{i_t, j_t} = \beta M^{t-1}_{i_t, j_t} + (1 - \beta) \widetilde{\nabla}_{i_t,j_t} f(x^{t}, \xi^t)$ and $M^{t}_{i \neq i_t, j \neq j_t} = M^{t-1}_{i \neq i_t, j \neq j_t}$ \label{line:mt_jaguar}
        \State Set $X^{t+1} = X^t - \gamma \cdot \texttt{Newton\_Schulz}(M^{t}, K=\text{ns\_steps})$
    \EndFor
    \State {\bf Return:} $X^{N(T)}$, where $N(T) \sim \text{ Uniform}(\overline{1, T})$.
\end{algorithmic}
\begin{algorithmic}[1]
    \State \textbf{Subroutine} $\texttt{Newton\_Schulz}(A \in \mathbb{R}^{m \times n}, K = 5)$ \cite{bernstein2024old}:
    \State \quad Set $A^0 = A / \| A \|_F$
    \State \quad \textbf{for} {$k = 0, 1, 2, \dots, K$} \textbf{do}
    \State \quad \quad $A^{k+1} = 3/2 \cdot A^k - 1/2 \cdot A^k (A^k)^T A^k$
    \State \quad \textbf{end for}
    \State \quad {\bf Return:} $A^K \approx U_A \cdot V_A^T$.
\end{algorithmic}
\end{algorithm}

Algorithm \ref{algorithm:muon} is similar to the first-order Muon algorithm \cite{muon_base}, the only difference is that we use zero-order gradient approximation JAGUAR \cite{veprikov2024new} in line \ref{line:mt_jaguar_muon}. 

Let us note that when extending to matrix-valued parameters, it is necessary to slightly modify Assumptions \ref{as:lip} and \ref{as:sigma}: all occurrences of the $\ell_2$ norm $\| \cdot \|_2$ should be replaced with the Frobenius norm $\| \cdot \|_F$. This modification is justified, as the following property holds for all matrices $A \in \mathbb{R}^{m \times n}$: $\| A \|_F = \|\overline{\text{vec}}(A)\|_2$.
We now provide the convergence analysis of \texttt{JAGUAR Muon} (Algorithm \ref{algorithm:muon}). 

\begin{theorem}
    \label{theorem:jaguar_muon}
    Consider Assumptions \ref{as:lip}, \ref{as:sigma} (with Frobenius norm) and \ref{as:delta}. Then \texttt{JAGUAR Muon} (Algorithm \ref{algorithm:muon}) has the following convergence rate:
        \begin{equation*}
        \begin{split}
            \expect{\norm{\nabla f\left(X^{N(T)} \right)}_{\mathcal{S}_1}}
            =
            \mathcal{O} \Bigg[ 
                &\frac{\delta_0}{\gamma T}
                +
                \frac{m^{1/2}n \norm{\nabla f(X^0)}_2}{T \sqrt{1 - \beta}}
                +
                \frac{m^{3/2}n^2 \gamma}{1-\beta}  
                +
                \sqrt{1-\beta} m^{1/2}n \sigma
                \\&+
                m^{1/2}n L \tau
                +
                \frac{m^{1/2}n \Delta}{\tau} 
            \Bigg],
        \end{split}
        \end{equation*}
        where we used a notation $\delta_0 := f(x^0) - f^*$. We also assume that $n \leq m$.
\end{theorem}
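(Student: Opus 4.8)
The plan is to reduce Theorem~\ref{theorem:jaguar_muon} to two ingredients: a Frobenius-norm analogue of Lemma~\ref{lemma:mt_jaguar} for the matrix momentum $M^t$, and a descent inequality adapted to the orthogonalized Muon step. For the first, note that in Algorithm~\ref{algorithm:muon} the JAGUAR momentum acts on the $mn$ entries of $M^t$ exactly as the vector version acts on the $d$ coordinates of $m^t$, that $\norm{A}_F = \norm{\overline{\text{vec}}(A)}_2$, and that sampling a one-hot matrix $E^t$ is the same as sampling a one-hot vector of length $mn$. Hence, vectorizing $X^t, M^t, E^t$ and invoking the already-established argument with $d\mapsto mn$ gives
$$
\expect{\norm{M^t - \nabla f(X^t)}_F^2}
= \mathcal{O}\!\left[
\frac{(mn)^3 L^2\gamma^2}{(1-\beta)^2}
+ (1-\beta)mn\,\sigma^2
+ mn\,L^2\tau^2
+ \frac{mn\,\Delta^2}{\tau^2}
+ \Big(\tfrac{1-\beta}{mn}\Big)^{t}\norm{\nabla f(X^0)}_F^2
\right].
$$
The only thing to re-verify is that the single-coordinate finite difference still satisfies $|\widetilde{\nabla}_{i_t} f(X^t,\xi^t) - \nabla_{i_t,j_t} f(X^t,\xi^t)| = \mathcal{O}(L(\xi^t)\tau + \Delta/\tau)$, which is identical to the vector estimate \eqref{eq:zo_grad}.

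Next, the descent inequality. Let $O^t := \texttt{Newton\_Schulz}(M^t)$; taking this to be (or be arbitrarily close to) the polar factor $U_{M^t}V_{M^t}^{\top}$, it is semi-orthogonal, so $\norm{O^t}_{\mathcal{S}_\infty}=1$, $\norm{O^t}_F^2 = \mathrm{rank}(M^t) \le \min\{m,n\} = n$ (using $n\le m$), and crucially $\langle M^t, O^t\rangle = \norm{M^t}_{\mathcal{S}_1}$ — the matrix analogue of the elementary identity $\langle \mathrm{sign}(m^t),m^t\rangle = \norm{m^t}_1$ underlying \texttt{JAGUAR SignSGD}. Applying $L$-smoothness in the Frobenius norm to $X^{t+1}=X^t-\gamma O^t$, writing $\langle\nabla f(X^t),O^t\rangle = \norm{M^t}_{\mathcal{S}_1} - \langle M^t-\nabla f(X^t),O^t\rangle$, using $\norm{M^t}_{\mathcal{S}_1}\ge \norm{\nabla f(X^t)}_{\mathcal{S}_1} - \norm{M^t-\nabla f(X^t)}_{\mathcal{S}_1}$ together with $|\langle M^t-\nabla f(X^t),O^t\rangle|\le \norm{M^t-\nabla f(X^t)}_{\mathcal{S}_1}\norm{O^t}_{\mathcal{S}_\infty}$, and finally converting via $\norm{A}_{\mathcal{S}_1}\le\sqrt{\mathrm{rank}(A)}\,\norm{A}_F\le\sqrt{n}\,\norm{A}_F$, I obtain
$$
f(X^{t+1}) \le f(X^t) - \gamma\norm{\nabla f(X^t)}_{\mathcal{S}_1} + 2\gamma\sqrt{n}\,\norm{M^t-\nabla f(X^t)}_F + \tfrac{L\gamma^2 n}{2}.
$$

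To finish, I would rearrange, sum over $t=1,\dots,T$, take total expectation, telescope $\sum_t\big(f(X^t)-f(X^{t+1})\big)\le f(X^0)-f^*=\delta_0$, apply Jensen to move $\expect{\cdot}$ inside the square root of the Lemma bound, split $\sqrt{a_1+\dots+a_5}\le\sum_i\sqrt{a_i}$, and evaluate the geometric sum $\tfrac1T\sum_{t}\big(\tfrac{1-\beta}{mn}\big)^{t/2}$, which with the $\sqrt n$ prefactor contributes the $\tfrac{m^{1/2}n\norm{\nabla f(X^0)}_2}{T\sqrt{1-\beta}}$ term. Dividing by $T$ and using $\tfrac1T\sum_{t=1}^T\expect{\norm{\nabla f(X^t)}_{\mathcal{S}_1}} = \expect{\norm{\nabla f(X^{N(T)})}_{\mathcal{S}_1}}$, the $\sqrt n$ multiplying $\sqrt{(mn)^3}\,L\gamma/(1-\beta)$, $\sqrt{(1-\beta)mn}\,\sigma$, $\sqrt{mn}\,L\tau$, and $\sqrt{mn}\,\Delta/\tau$ produces exactly $m^{3/2}n^2\gamma/(1-\beta)$, $\sqrt{1-\beta}\,m^{1/2}n\sigma$, $m^{1/2}nL\tau$, and $m^{1/2}n\Delta/\tau$, while $\tfrac{L\gamma n}{2}$ is absorbed into the first of these, matching the claimed rate.

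The main obstacle I anticipate is the matrix-specific bookkeeping rather than a new conceptual difficulty: one must juggle the three Schatten norms $\mathcal{S}_1$, $\mathcal{S}_2=\norm{\cdot}_F$, $\mathcal{S}_\infty$ consistently and keep the rank-$\le n$ factor $\sqrt n$ in the right places, and — more delicately — the identities $\langle M^t,O^t\rangle=\norm{M^t}_{\mathcal{S}_1}$ and $\norm{O^t}_{\mathcal{S}_\infty}=1$ hold only for the exact polar factor, whereas \texttt{Newton\_Schulz} runs for finitely many steps; so strictly one either shows that $\mathrm{ns\_steps}$ large enough makes the resulting perturbation lower-order, or states the guarantee for exact orthogonalization. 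Everything else transfers essentially verbatim from the \texttt{JAGUAR SignSGD} analysis (Theorem~\ref{theorem:jaguar_sign}) under vectorization.
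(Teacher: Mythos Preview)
Your proposal is correct and follows essentially the same route as the paper: the paper also reduces the momentum estimate to Lemma~\ref{lemma:mt_jaguar} via vectorization with $d=mn$, and its descent inequality (Lemma~\ref{lemma:muon_step}) is derived by exactly your triangle-inequality-plus-H\"older argument on Schatten norms, just packaged through an auxiliary lemma bounding $|\langle A,\,U_AV_A^\top-U_BV_B^\top\rangle|\le 2\|A-B\|_{\mathcal S_1}$. Your explicit remark that the identities $\langle M^t,O^t\rangle=\|M^t\|_{\mathcal S_1}$ and $\|O^t\|_{\mathcal S_\infty}=1$ hold only for the exact polar factor is a caveat the paper leaves implicit.
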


\begin{corollary}
\label{cor:muon}
    Consider the conditions of Theorem \ref{theorem:jaguar_muon}. In order to achieve the $\varepsilon$-approximate solution (in terms of $\mathbb{E}[\| \nabla f(X^{N(T)})\|_{\mathcal{S}_1}] \leq \varepsilon$), Algorithm \ref{algorithm:muon} needs $T$ iterations (ZO calls), for:

    \textbf{Arbitrary tuning:}
    $
        \gamma = \gamma_0 \cdot T^{-3/4} (mn)^{-1}, 
        \beta = 1 - T^{-1/2},
        \tau = \left( \Delta / L \right)^{1/2},
        \varepsilon \geq m^{1/2} n \sqrt{\Delta L}:
    $
    \begin{equation*}
        T = \mathcal{O} \left[ \left( \frac{mn \delta_0 / \gamma_0 + m^{1/2} n \norm{\nabla f(X^0)}_2 + m^{1/2}n L \gamma_0 + m^{1/2}n \sigma }{\varepsilon} \right)^4\right] .
    \end{equation*}
    \textbf{Optimal tuning:}
    $
        \gamma = \sqrt{\frac{\delta_0 (1 - \beta)}{m^{3/2}n^2 L T}}, 
        \beta = 1\!-\!\min\left\{1 ; \sqrt{\frac{L \delta_0}{T \sigma^2}}\right\},
        \tau = \left( \Delta / L \right)^{1/2},
        \varepsilon \geq m^{1/2} n \sqrt{\Delta L}:
    $
    \begin{equation*}
        T = \mathcal{O} \left[ \frac{\delta_0 L m^{3/2}n^2}{\varepsilon^2} + \frac{\delta_0 L m^{3/2}n^2}{\varepsilon^2} \cdot \left( \frac{m^{3/2}n^2 \sigma}{\varepsilon} \right)^2\right] .
    \end{equation*}
\end{corollary}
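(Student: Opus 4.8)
The plan is to treat Theorem \ref{theorem:jaguar_muon} as a black box and simply substitute the two prescribed tunings into its right-hand side, then invert the resulting inequality for $T$. The argument is a bookkeeping exercise that mirrors Corollary \ref{cor:jaguar_sign}, with the scalar dimension factors $d$ and $d^2$ there replaced by the matrix factors $m^{1/2}n$ and $m^{3/2}n^2$ here. Before touching $\gamma$ and $\beta$, I would dispose of the two smoothing-parameter terms. Substituting $\tau = (\Delta/L)^{1/2}$ into $m^{1/2}nL\tau + m^{1/2}n\Delta/\tau$ collapses both into $2m^{1/2}n\sqrt{L\Delta}$, since $La + \Delta/a \ge 2\sqrt{L\Delta}$ attains its minimum at $a = \sqrt{\Delta/L}$. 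The hypothesis $\varepsilon \ge m^{1/2}n\sqrt{\Delta L}$ then guarantees this contribution is at most a constant times $\varepsilon$, so it never constrains $T$ and is folded into the target accuracy. This is the only place Assumption \ref{as:delta} enters, and it explains why $\varepsilon$ cannot be pushed below the irreducible level $m^{1/2}n\sqrt{\Delta L}$.

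For the arbitrary tuning I would insert $\gamma = \gamma_0 T^{-3/4}(mn)^{-1}$ and $1-\beta = T^{-1/2}$ and verify that every surviving term decays at rate $T^{-1/4}$: the term $\delta_0/(\gamma T)$ becomes $\delta_0 mn/(\gamma_0 T^{1/4})$; the $\gamma$-linear descent term $m^{3/2}n^2 L\gamma/(1-\beta)$ becomes $m^{1/2}nL\gamma_0/T^{1/4}$ (this is where the factor $L$ in the coefficient $m^{1/2}nL\gamma_0$ originates); the variance term $\sqrt{1-\beta}\,m^{1/2}n\sigma$ becomes $m^{1/2}n\sigma/T^{1/4}$; and the initialization term $m^{1/2}n\|\nabla f(X^0)\|_2/(T\sqrt{1-\beta})$ decays faster, as $T^{-3/4}$, so I would bound it above by its $T^{-1/4}$ counterpart with coefficient $m^{1/2}n\|\nabla f(X^0)\|_2$. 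Collecting the four coefficients and requiring their sum divided by $T^{1/4}$ to be at most $\varepsilon$ yields $T^{1/4} \gtrsim (\text{coefficient sum})/\varepsilon$, which is the stated fourth-power expression.

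The optimal tuning is the only step with genuine content. Here $\gamma = \sqrt{\delta_0(1-\beta)/(m^{3/2}n^2 L T)}$ is chosen precisely to equate $\delta_0/(\gamma T)$ with the descent term $m^{3/2}n^2 L\gamma/(1-\beta)$; a short computation shows both then equal $\sqrt{\delta_0 m^{3/2}n^2 L/(T(1-\beta))}$. It remains to balance this against the variance term $\sqrt{1-\beta}\,m^{1/2}n\sigma$ over the choice of $\beta$, which is exactly what the clipped rule $1-\beta = \min\{1,\sqrt{L\delta_0/(T\sigma^2)}\}$ accomplishes. I would then split on the two branches of the minimum. When the minimum equals $1$ (the bias-dominated regime), the balanced quantity reduces to $\sqrt{\delta_0 m^{3/2}n^2 L/T}$, and setting it $\le\varepsilon$ gives the first summand $\delta_0 L m^{3/2}n^2/\varepsilon^2$. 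When the minimum equals $\sqrt{L\delta_0/(T\sigma^2)}$ (the variance-dominated regime), substituting this $1-\beta$ back produces a quantity of order $T^{-1/4}$ whose inversion yields the second, $\varepsilon^{-4}$ summand. Taking the sum (equivalently, the maximum) of the two requirements gives the stated two-term bound.

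I expect the main obstacle to be the careful propagation of the matrix-dimension factors through the nested square roots in the variance-dominated branch, since the powers of $m$ and $n$ must be tracked across two successive balancing steps, first over $\gamma$ and then over $\beta$; the assumption $n \le m$ is used to consolidate the resulting monomials into the single factor appearing in the second summand. Everything else is mechanical substitution, and no analytic ingredient beyond Theorem \ref{theorem:jaguar_muon} and the AM–GM inequality is required.
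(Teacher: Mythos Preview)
Your approach is correct and is exactly how such a corollary is derived; the paper in fact gives no explicit proof of Corollary~\ref{cor:muon} (nor of Corollary~\ref{cor:jaguar_sign}), treating both as immediate consequences of the corresponding theorems by the substitution-and-balancing routine you outline. One minor correction: the assumption $n\le m$ is not used in the corollary derivation to ``consolidate monomials'' as you suggest; it is already absorbed into Theorem~\ref{theorem:jaguar_muon} through the rank bound in Lemma~\ref{lemma:muon_step}, and the corollary itself is pure arithmetic on the theorem's right-hand side. If you actually carry out the variance-dominated branch you flag as the obstacle, you will find the dominant coefficient is $m^{3/4}n(L\delta_0)^{1/4}\sigma^{1/2}T^{-1/4}$, which inverts to $T\gtrsim \delta_0 L m^{3}n^{4}\sigma^{2}/\varepsilon^{4}$; this is tighter than the stated second summand $\delta_0 L m^{9/2}n^{6}\sigma^{2}/\varepsilon^{4}$, so the printed constant appears to be loose (or a typo), not something your argument needs extra work to reach.
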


\textbf{Discussion.}
The convergence rate established in Theorem \ref{theorem:jaguar_muon} is consistent with the first-order case \cite{muon_convergence, kovalev2025understanding}. However, there remain zero-order terms depending on $\tau$ and $\Delta$, as for Algorithm \ref{algorithm:jaguar} (see Theorem \ref{theorem:jaguar_sign} and Discussion part after it). From a proof perspective, Theorems \ref{theorem:jaguar_sign} and \ref{theorem:jaguar_muon} are very similar, since the orthogonalization operation (\texttt{Newton\_Schulz}) in Algorithm \ref{algorithm:muon} can be interpreted as taking the sign of the gradient matrix eigenvalues. Accordingly, both the form and the convergence rate criterion are analogous (the $\ell_1$ norm for Algorithm \ref{algorithm:jaguar} and the $\mathcal{S}_1$ norm for Algorithm \ref{algorithm:muon}). Nevertheless, the convergence rates of the two algorithms differ slightly. We examine the two boundary cases in the following remark.

\begin{remark} For optimal tuning from Corollary \ref{cor:muon} we can specify the number of iterations of Algorithm \ref{algorithm:muon} to achieve the $\varepsilon$-approximate solution in terms of the total number of parameters $d = m \cdot n$ in the two boundary cases: 
\begin{enumerate}[leftmargin=7pt]
    \item[$\bullet$] If $n \ll m \approx d$:
    $$
        \quad T_{n \ll m \approx d} = \mathcal{O} \left[ \frac{\delta_0 L d^{3/2}}{\varepsilon^2} + \frac{\delta_0 L d^{3/2}}{\varepsilon^2} \cdot \left( \frac{d^{3/2} \sigma}{\varepsilon} \right)^2\right] .
    $$
    \item[$\bullet$] If $n \approx m \approx \sqrt{d}$:
    $$ 
        \quad T_{n \approx m \approx \sqrt{d}} = \mathcal{O} \left[ \frac{\delta_0 L d^{7/4}}{\varepsilon^2} + \frac{\delta_0 L d^{7/4}}{\varepsilon^2} \cdot \left( \frac{d^{7/4}\sigma}{\varepsilon} \right)^2\right] .
    $$
\end{enumerate}
\end{remark}
Accordingly, comparing these convergence rates with that obtained in Corollary \ref{cor:muon}, we observe an improvement by factors of $d^{1/2}$ and $d^{1/4}$, respectively.

\textcolor{blue}{}

\section{Experiments} \label{sec:exp}

In this section, we present a comprehensive empirical evaluation to validate the theoretical contributions of our proposed ZO optimization methods for fine-tuning large language models. Our study aims to assess both the accuracy and memory efficiency of these methods, comparing them against established ZO and FO baselines. We build upon the experimental framework proposed in \cite{zo_bench}, extending it to incorporate our novel algorithms: \texttt{JAGUAR SignSGD} (Algorithm \ref{algorithm:jaguar}) and \texttt{JAGUAR Muon} (Algorithm \ref{algorithm:muon}). The primary objective is to achieve competitive test accuracy on downstream tasks while maintaining memory efficiency comparable to the baseline methods. Additionally, we introduce \texttt{ZO-Muon} (Algorithm \ref{algorithm:zo_muon_base} in Appendix \ref{subsec:mezo_generation}), a direct zero-order adaptation of Muon \cite{muon_base}, utilizing the standard Gaussian zero-order gradient estimation \eqref{eq:zo_grad}. 

\subsection{Experimental Setup}

\textbf{Fine-Tuning Task and Schemes.} Fine-tuning LLMs is a pivotal process in adapting pre-trained models to downstream tasks, enabling high performance with limited task-specific data. 
To explore the efficacy of our ZO methods, we focus on the SST2 dataset \cite{socher2013recursive}, a widely-used benchmark for binary sentiment classification \cite{zo_bench,lozo,malladi2023fine}. Additionally, we measure performance of Llama2-7B \cite{touvron2023llama} and OPT-13B \cite{zhang2022opt} on WinoGrande \cite{sakaguchi2021winogrande} and COPA \cite{roemmele2011choice} datasets (see Appendix \ref{appendix:exp_setup} for details). We consider two fine-tuning schemes:
\begin{itemize}[leftmargin=7pt]
    \item \textbf{Full Fine-Tuning (FT):} Updates all parameters of the pre-trained model, offering maximum flexibility at the cost of higher computational resources.
    \item \textbf{Low-Rank Adaptation (LoRA):} Introduces a small set of trainable parameters while keeping the original model parameters frozen, enhancing memory efficiency \cite{hu2021lora}.
\end{itemize}

\textbf{Models.} We conduct experiments using four prominent LLMs: OPT-1.3B~\cite{zhang2022opt}, a 1.3 billion parameter model from the OPT family; RoBERTa-Large~\cite{liu2019roberta}, a 355 million parameter model known for its robust performance in natural language processing tasks; Llama 2~\cite{touvron2023llama} and OPT-13B~\cite{zhang2022opt}, state-of-the-art open-source models widely used for research and applications. These models represent a range of sizes and architectures, allowing us to assess the scalability and generality of our methods.

\textbf{Methods.} We evaluate the following ZO optimization methods proposed in this work:
\begin{itemize}[leftmargin=7pt]
    \item \textbf{\texttt{JAGUAR SignSGD}:} Combines the JAGUAR gradient approximation \cite{veprikov2024new} with SignSGD and momentum for efficient updates (Algorithm \ref{algorithm:jaguar}).
    \item \textbf{\texttt{JAGUAR Muon}:} Integrates JAGUAR with the Muon optimizer, incorporating momentum and orthogonalization (Algorithm \ref{algorithm:muon}).
    \item \textbf{\texttt{ZO-Muon}:} A novel ZO adaptation of the Muon optimizer, leveraging matrix-based optimization principles (Algorithm \ref{algorithm:zo_muon_base} in Appendix \ref{subsec:mezo_generation}).
\end{itemize}

\textbf{Comparison procedure.}
For comparison, we include baseline methods from \cite{zo_bench}: ZO-SGD \cite{ghadimi2013stochastic}, Acc-ZOM \cite{huang2022accelerated}, ZO-SGD-Cons \cite{kim2025conserv}, ZO-SignSGD \cite{liu2019signsgd}, ZO-AdaMM \cite{chen2019zo}, Forward-Grad \cite{baydin2022gradientsbackpropagation}, and the FO method FO-SGD \cite{amari1993backpropagation}. 
The results for which are given in the benchmark paper. Additionally, we compare our methods with LeZO \cite{wang2024simultaneous}, which employs a comparable layer-wise selection mechanism similar to \texttt{JAGUAR SignSGD} coordinate-wise updates. We perform experiments for our methods in accordance with similar experiments from \cite{zo_bench}. For details of our hyperparameter selection and model training procedure, see Appendix \ref{appendix:exp_setup}.

\begin{table}[!pt]
    \centering
    \caption{Test accuracy on SST2 for OPT-1.3B and RoBERTa-Large with FT and LoRA. Best performance among ZO methods is in \textbf{bold}. \textcolor{blue}{Blue} indicates outperformance of all baseline ZO methods, \textcolor{purple}{red} indicates matching or exceeding FO-SGD.}
    \resizebox{\linewidth}{!}{
    \begin{tabular}{l|cc|cc}
    \toprule
    Method & \multicolumn{2}{c}{OPT-1.3B} & \multicolumn{2}{c}{RoBERTa-Large} \\
    \cmidrule(lr){2-3} \cmidrule(lr){4-5}
    & FT & LoRA & FT & LoRA \\
    \midrule
    FO-SGD & 91.1 & 93.6 & 91.4 & 91.2 \\
    Forward-Grad & 90.3 & 90.3 & 90.1 & 89.7 \\
    ZO-SGD & 90.8 & 90.1 & 89.4 & 90.8 \\
    Acc-ZOM & 85.2 & 91.3 & 89.6 & 90.9 \\
    ZO-SGD-Cons & 88.3 & 90.5 & 89.6 & 91.6 \\
    ZO-SignSGD & 87.2 & 91.5 & 52.5 & 90.2 \\
    ZO-AdaMM & 84.4 & 92.3 & 89.8 & 89.5 \\
    LeZO & 85.1 & 92.3 & 90.4 & 91.8\\
    \midrule
    \cellcolor{bgcolor2} \texttt{JAGUAR SignSGD} & \cellcolor{bgcolor2}{\textbf{\textcolor{purple}{94.0 $\pm$ 0.1}}} & \cellcolor{bgcolor2}{\textcolor{blue}{92.5 $\pm$ 0.5}} & \cellcolor{bgcolor2}{\textbf{\textcolor{purple}{92.2 $\pm$ 0.2}}} & \cellcolor{bgcolor2}{\textbf{\textcolor{purple}{92.2 $\pm$ 0.4}}} \\
    
    \cellcolor{bgcolor2} \texttt{JAGUAR Muon} & \cellcolor{bgcolor2}{84.0 $\pm$ 0.1} & \cellcolor{bgcolor2}{\textbf{\textcolor{purple}{94.0 $\pm$ 0.1}}} & \cellcolor{bgcolor2}85.0 $\pm$ 0.1 & \cellcolor{bgcolor2}{\textbf{\textcolor{purple}{92.2 $\pm$ 0.2}}} \\
    
    \cellcolor{bgcolor2} \texttt{ZO-Muon} & \cellcolor{bgcolor2}{86.5 $\pm$ 0.1} & \cellcolor{bgcolor2}{\textcolor{purple}{93.5 $\pm$ 0.1}} & \cellcolor{bgcolor2}{72.0 $\pm$ 0.1} & \cellcolor{bgcolor2}{86.0 $\pm$ 0.2}  \\
    \bottomrule
    \end{tabular}
    }
    \label{tab:results}
\end{table}

\begin{table}[bp!]
    \centering
    \caption{Test accuracy on COPA and WinoGrande for OPT-13B and Llama2-7B with LoRA. Best performance among ZO methods is in \textbf{bold}. \textcolor{blue}{Blue} indicates outperformance of all baseline ZO methods, \textcolor{purple}{red} indicates matching or exceeding FO-SGD.}
    \resizebox{\linewidth}{!}{
    \begin{tabular}{l|cc|cc}
    \toprule
    Method & \multicolumn{2}{c}{OPT-13B} & \multicolumn{2}{c}{LLaMA2-7B} \\
    \cmidrule(lr){2-3} \cmidrule(lr){4-5}
    & COPA & WinoGrande & COPA & WinoGrande \\
    \midrule
    FO-SGD & 88 & 66.9 & 85 & 66.9 \\
    Forward-Grad & 89 & 62.9 & 82 & 64.3 \\
    ZO-SGD & 87 & 62.6 & 86 & 64.3 \\
    ZO-SGD-Cons & 88 & 63.3 & 85 & 64.6 \\
    \midrule
    \cellcolor{bgcolor2} \texttt{JAGUAR SignSGD} & \cellcolor{bgcolor2}\textbf{\textcolor{purple}{89 $\pm$ 0.3}} & \cellcolor{bgcolor2}\textbf{\textcolor{blue}{63.7$\pm$ 0.1}} & \cellcolor{bgcolor2}\textbf{\textcolor{purple}{88 $\pm$ 0.2}} &  \cellcolor{bgcolor2}\textbf{\textcolor{blue}{64.9$\pm$ 0.1}} \\
    
    \cellcolor{bgcolor2} \texttt{JAGUAR Muon} & \cellcolor{bgcolor2}{87 $\pm$ 0.2} & \cellcolor{bgcolor2}62.3 $\pm$ 0.2 & \cellcolor{bgcolor2}\textbf{\textcolor{purple}{88 $\pm$ 0.1}}& \cellcolor{bgcolor2}62.8 $\pm$ 0.2\\

    \cellcolor{bgcolor2} \texttt{ZO-Muon} & \cellcolor{bgcolor2} 87 $\pm$ 0.2 & \cellcolor{bgcolor2}61.9 $\pm$ 0.3 & \cellcolor{bgcolor2} 85 $\pm$ 0.2 & \cellcolor{bgcolor2} 61.6 $\pm$ 0.2\\
    
    \bottomrule
    \end{tabular}
    }
    \label{tab:llama}
\end{table}

\subsection{Results}
\textbf{OPT-1.3B and RoBERTa-Large models.} Table \ref{tab:results} presents the test accuracy results for SST2 across both OPT-1.3B and RoBERTa-Large models and fine-tuning schemes. Our proposed methods demonstrate strong performance, often outperforming baseline ZO methods. Based on the results presented in Table \ref{tab:results}, proposed methods (Algorithms \ref{algorithm:jaguar} and \ref{algorithm:muon}) that leverage the JAGUAR approximation of gradient outperform comparable approaches utilizing standard random vector sampling $e$ in equation \eqref{eq:zo_grad} or vanilla momentum techniques originally designed for FO algorithms. However, \texttt{ZO-Muon} and \texttt{JAGUAR Muon} show reduced FT performance, potentially due to the presence of non-matrix parameters in the full FT process.

\textbf{OPT-13B and Llama2-7B models.} To justify the reliability of the proposed methods, we conduct additional experiments with large-size models: OPT-13B \cite{zhang2022opt} and Llama2-7B \cite{touvron2023llama} on WinoGrande \cite{sakaguchi2021winogrande} and COPA \cite{roemmele2011choice} tasks. Within this series of evaluations, we implement a learning schedulers —cosine for Llama2-7B and polynomial decay for OPT-13B. We repeat the evaluation results from \cite{zo_bench} as baselines in Table \ref{tab:llama}. However, in the mentioned work, the authors do not report memory efficiency, which is a sufficient indicator in parameter-efficient fine-tuning competition. The ZO-AdaMM method was not considered in our experiments due to its prohibitively high memory requirements. 

\textbf{Discussion.} The results from Tables \ref{tab:results} and \ref{tab:llama} demonstrate that \texttt{JAGUAR SignSGD} and \texttt{JAGUAR Muon} achieve superior performance, demonstrating the effectiveness and robustness compared to existing baselines. Our methods excel in real-world applications, particularly where memory limits hinder traditional FO techniques. The results demonstrate the effectiveness and scalability of our approaches, confirming their advantages in challenging, high-capacity settings.

\subsection{Memory Efficiency}
Tables \ref{tab:memory} and \ref{tab:llama_memory} compares GPU allocated memory for OPT-1.3B, Llama-7B and OPT-13B highlighting the efficiency of our methods. Results of this experiment demonstrate that our approaches effectively balance accuracy gains with memory efficiency.
\begin{table}[h!]
    \centering
    \caption{GPU allocated memory (GB) for OPT-1.3B (half-precision, F16) on SST2 with FT and LoRA.}
    \resizebox{0.7\linewidth}{!}{
    \begin{tabular}{lcc}
    \toprule
    Method & FT Memory & LoRA Memory \\
    \midrule
    FO-SGD & 12.246 & 5.855 \\
    ZO-SGD & 4.171 & 4.125 \\
    ZO-AdaMM & 13.046 & 6.132 \\
    \midrule
    \cellcolor{bgcolor2}\texttt{JAGUAR SignSGD} & \cellcolor{bgcolor2}4.172 & \cellcolor{bgcolor2}4.128 \\
    \cellcolor{bgcolor2}\texttt{JAGUAR Muon} & \cellcolor{bgcolor2}4.179 & \cellcolor{bgcolor2}4.132 \\
    \cellcolor{bgcolor2}\texttt{ZO-Muon} & \cellcolor{bgcolor2}4.177 & \cellcolor{bgcolor2}4.130 \\
    \bottomrule
    \end{tabular}
    }
    \label{tab:memory}
\end{table}
\begin{table}[h!]
    \centering
    \caption{GPU allocated memory (GB) for OPT-13B and LLaMA2-7B on WinoGrande and COPA with LoRA}
    \resizebox{0.8\linewidth}{!}{
    \begin{tabular}{l|cc|cc}
    \toprule
    Method & \multicolumn{2}{c}{OPT-13B} & \multicolumn{2}{c}{LLaMA2-7B} \\
    \cmidrule(lr){2-3} \cmidrule(lr){4-5}
    & COPA & WinoGrande & COPA & WinoGrande \\
    \midrule
    ZO-SGD & 24.710 & 26.407 & 13.219 & 14.670 \\
    ZO-Adam & 38.612 & 39.872 & 27.971 & 29.440 \\
    \midrule
    \cellcolor{bgcolor2} \texttt{JAGUAR SignSGD} &\cellcolor{bgcolor2}24.712  & \cellcolor{bgcolor2}26.408 & \cellcolor{bgcolor2}13.219  & \cellcolor{bgcolor2}14.672 \\
    
    \cellcolor{bgcolor2} \texttt{JAGUAR Muon} & \cellcolor{bgcolor2}25.880 & \cellcolor{bgcolor2}27.440 & \cellcolor{bgcolor2}16.032 & \cellcolor{bgcolor2}17.992\\

    \cellcolor{bgcolor2} \texttt{ZO-Muon} & \cellcolor{bgcolor2}25.740 & \cellcolor{bgcolor2}27.416 & \cellcolor{bgcolor2}15.021 & \cellcolor{bgcolor2}16.992\\
    \bottomrule
    \end{tabular}
    }
    \label{tab:llama_memory}
\end{table}




\end{mainpart}

\begin{appendixpart}
\section{Ablation study} \label{subsec:beta_ablation}


We present an ablation study on the effect of the $\beta$ parameter from Algorithm \ref{algorithm:jaguar} on learning efficiency. Figure \ref{fig:beta_ablation} reports the accuracy of the \texttt{JAGUAR SignSGD} method on the SST-2 dataset with the RoBERTa-large model across different values of $\beta$. The results indicate that the choice of $\beta$ has a significant impact on model performance. Specifically, small values of $\beta$ lead to substantially lower accuracy, suggesting that insufficient momentum or smoothing in the update steps can hinder effective learning. As $\beta$ increases, the method benefits from more stable gradient aggregation, resulting in improved convergence behavior. Notably, around $\beta \approx 0.9$, the method achieves robust and consistently high performance, indicating that this range provides an optimal balance between responsiveness to new gradient information and stability in updates. This highlights the importance of tuning $\beta$ carefully to maximize the learning efficiency and predictive performance of \texttt{JAGUAR SignSGD}.

\begin{figure}[H]
    \centering
    \includegraphics[width=\linewidth]{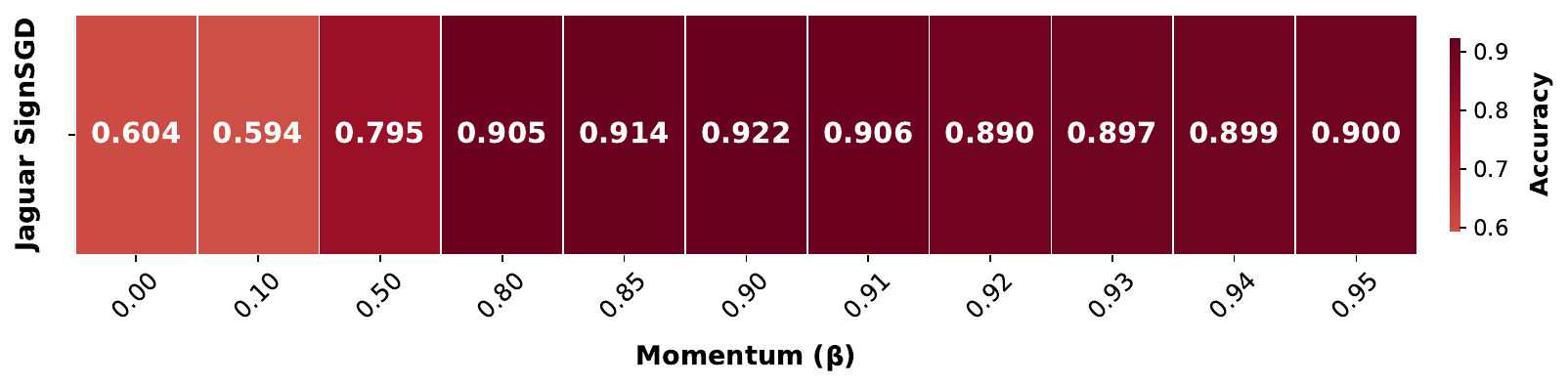}
    \caption{Test accuracy of \texttt{JAGUAR SignSGD} on SST-2 for RoBERTa-large with LoRA for different values of $\beta$.}
    \label{fig:beta_ablation}
\end{figure}

\section{Classical ZO Muon} \label{subsec:mezo_generation}
Using gradient estimate in the form \eqref{eq:zo_grad}, we adapt the Muon algorithm \cite{muon_base} into zero-order form:

\begin{algorithm}{Zero-Order Muon (\texttt{ZO-Muon})}
    \label{algorithm:zo_muon_base}
    \begin{algorithmic}[1]
        \State \textbf{Parameters:} stepsize (learning rate) $\gamma$, gradient approximation parameter $\tau$, number of iterations $T$.
        \State \textbf{Initialization:} choose $X_0 \in \mathbb{R}^{m \times n}$
        \For{$t = 0, 1, 2, \dots, T$}
            \State Sample $E^t \in \mathbb{R}^{m \times n}$ from $\mathcal{N}(0,1)$
            \State Compute $G^t = \frac{\hat{f}(X^t + \tau E^t) - \hat{f}(X^t - \tau E^t)}{2\tau}E^t$
            \State Set $X^{t+1} = X^t - \gamma \cdot \texttt{Newton\_Schulz}(G^t)$
        \EndFor
    \end{algorithmic}
\end{algorithm}


\section{Fine-Tuning Setup} \label{appendix:exp_setup}


\subsection{Evaluation Procedure}

\textbf{Schedulers.} We conduct experiments with different scheduling types. Results for \texttt{Jaguar Muon} (Algorithm \ref{algorithm:muon}) and \texttt{Muon} (Algorithm \ref{algorithm:zo_muon_base}) from Tables \ref{tab:results} (only for FT) and \ref{tab:llama} are obtained using polynomial scheduling technique. The rest of the experiments are conducted without scheduling.

\textbf{Hyperparameter Tuning.} To ensure optimal performance, we conduct a grid search over key hyperparameters for each method:
\begin{itemize}[leftmargin=7pt]
    \item Momentum parameter: $\beta \in \{10^{-3}, 10^{-2}, 10^{-1}, 8\cdot 10^{-1}\}$,
    \item Learning rate: $\gamma \in [10^{-6}, 10^{-1}]$,
    \item Smoothing parameter: $\tau \in \{10^{-1}, 10^{-2}, 10^{-3}\}$.
\end{itemize}
Additional fixed parameters include an epsilon of $10^{-3}$ for numerical stability. The best-performing hyperparameters for each algorithm are detailed on our github \url{https://github.com/brain-lab-research/zero-order-optimization}.

\textbf{Evaluation Metrics.} We assess performance using:
\begin{itemize}[leftmargin=7pt]
    \item \textbf{Test Accuracy:} Measured as the percentage of correct predictions on the test set, reflecting model effectiveness.
    \item \textbf{GPU allocated memory:} Quantified in gigabytes (GB) during training, indicating memory efficiency.
\end{itemize}

\textbf{Implementation Details.} We conduct experiments with three independent runs per configuration, each with a randomly selected seed fixed at the start to ensure reproducibility. We report the mean and standard deviation of test accuracy. Following \cite{malladi2023fine}, we employ half-precision (F16) training for ZO methods and mixed-precision (FP16) training for FO methods to optimize memory usage. We use LoRA \cite{hu2021lora} fine-tuning strategy with $r=16$. We perform training on a single NVIDIA A100 GPU and a single NVIDIA H100 GPU, with memory profiling by standard PyTorch utilities.


\subsection{Experimental Methodology}

Our experimental procedure designed to rigorously evaluate the proposed methods under controlled conditions. We consider different datasets (SST2, COPA, WinoGrande), models (OPT-1.3B, RoBERTa-Large, Llama2 7B, OPT-13B), fine-tuning schemes (FT, LoRA), and ZO and FO optimization methods (see Tables \ref{tab:results} and \ref{tab:llama}). We execute the following steps:
\begin{enumerate}[leftmargin=15pt]
    \item \textbf{Initialization:} Load the pre-train model and initialize trainable parameters (all for FT, LoRA-specific for LoRA).
    \item \textbf{Hyperparameter Selection:} Perform a preliminary parameter search to identify the best hyperparameters per method, iterating over the specified ranges and selecting based on validation accuracy.
    \item \textbf{Evaluation:} Compute test accuracy on the dataset test set after each run, averaging results across three runs with different seeds.
    \item \textbf{Memory Profiling:} Record GPU allocated memory during training, ensuring consistency by maintaining identical hardware settings.
\end{enumerate}

This methodology ensures a fair comparison across methods, capturing both performance and resource utilization comprehensively.

\newpage
\section{Proofs for ZO Momentum SignSGD with JAGUAR (Algorithm \ref{algorithm:jaguar})}

    \subsection{Proof of Lemma \ref{lemma:mt_jaguar}}
    \begin{proof}
        We start with applying one step recursion to the momentum form the Algorithm \ref{algorithm:jaguar}:
        \begin{align}
            \expect{\norm{m^{t} - \nabla f(x^t)}_2^2} 
            &=
            \expectBig{\normBig{
                m^{t-1} 
                - (1-\beta) \dotprod{m^{t-1}}{e^t}e^t \nonumber
                \\&\qquad\quad+ (1-\beta) \widetilde{\nabla}_{i_t} f(x^{t}, \xi^t)
                - \nabla f(x^t)
            }_2^2}
            \nonumber
            \\&=
            \expectBig{\normBig{
                \left\{I - (1-\beta)e^t (e^t)^T \right\} \underbrace{\left\{ m^{t-1} - \nabla f(x^{t-1}) \right\}}_{=: a^{t}}
                \nonumber
                \\&\qquad\quad+ 
                (1 - \beta) e^t (e^t)^T \underbrace{\left\{ \widetilde{\nabla} f(x^t, \xi^t) - \nabla f(x^t) \right\}}_{=: b^{t}}
                \label{eq:jaguar_app_1}
                \\&\qquad\quad- 
                \left\{I - (1-\beta)e^t (e^t)^T \right\} \underbrace{\left\{ \nabla f(x^t) - \nabla f(x^{t-1}) \right\}}_{=:c^t}
            }_2^2}, \nonumber
        \end{align}
        where we used a notation $\widetilde{\nabla} f(x, \xi) := \sum_{i=1}^d \frac{\hat{f}(x + \tau e^i, \xi) - \hat{f}(x - \tau e^i, \xi)}{2 \tau} e^i$, and $e^i$ is the one-hot vector with $1$ in the $i$-th coordinate. In equation \eqref{eq:jaguar_app_1} we also used the classical notation of the identity matrix $I \in \mathbb{R}^{d \times d}$.

        Now using axillary notations $a^t, b^t, c^t$ from equation \eqref{eq:jaguar_app_1}, we divide it into six parts:
        \begin{equation}
        \label{eq:jaguar_app_2}
        \begin{split}
            \expect{\norm{a^{t+1}}_2^2} 
            &=
            \underbrace{\expect{\norm{
                \left\{I - (1-\beta)e^t (e^t)^T \right\} a^t}_2^2}}_{\circledOne}
            \\&+
            \underbrace{\expect{\norm{
                (1 - \beta) e^t (e^t)^T b^t}_2^2}}_{\circledTwo}
            \\&+
            \underbrace{\expect{\norm{
                \left\{I - (1-\beta)e^t (e^t)^T \right\} c^t}_2^2}}_{\circledThree}
            \\&+
            \underbrace{\expect{2\dotprod{
                \left\{I - (1-\beta)e^t (e^t)^T \right\} a^t
            }{
                (1 - \beta) e^t (e^t)^T b^t
            }}}_{\circledFour}
            \\&+
            \underbrace{\expect{2\dotprod{
                \left\{I - (1-\beta)e^t (e^t)^T \right\} a^t
            }{
                \left\{I - (1-\beta)e^t (e^t)^T \right\} c^t
            }}}_{\circledFive}
            \\&+
            \underbrace{\expect{2\dotprod{
                (1 - \beta) e^t (e^t)^T b^t
            }{
                \left\{I - (1-\beta)e^t (e^t)^T \right\} c^t
            }}}_{\circledSix}.
        \end{split}
        \end{equation}

        Consider $\circledOne$. Since $i_t$ from Algorithm \ref{algorithm:jaguar} is generated independent and uniform and $\{m^{s-1}, x^s\}_{s=0}^{t}$ do not depend on $i_t$, we can apply tower property:
        \begin{align}
        \label{eq:jaguar_circ1}
            \circledOne &= \expect{\norm{
                \left\{I - (1-\beta)e^t (e^t)^T \right\} a^t
            }_2^2}
            \nonumber
            \\&=
            \expect{
                (a^t)^T \left\{I - (1-\beta)e^t (e^t)^T \right\}^T \left\{I - (1-\beta)e^t (e^t)^T \right\} a^t
            }
            \nonumber
            \\&=
            \expect{
                (a^t)^T \left\{I - (1-\beta) (2 - (1-\beta)) e^t (e^t)^T \right\} a^t
            }
            \nonumber
            \\&=
            \expect{
                (a^t)^T \cdot \EEb{i_t \sim U[1; d]}{I - (1-\beta^2) e^t (e^t)^T} \cdot  a^t
            }
            \nonumber
            \\&=
            \expect{
                (a^t)^T \cdot \left(1 - \frac{1 - \beta^2}{d} \right) I \cdot  a^t
            }
            =
            \left(1 - \frac{1 - \beta^2}{d} \right) \expect{\norm{a^t}_2^2}.
        \end{align}
        Here we used the fact that $\left(e^t (e^t)^T\right)^T e^t (e^t)^T = e^t (e^t)^T$ and $\EEb{i_t \sim U[1; d]}{e^t (e^t)^T} = \frac{1}{d} I$.

        Similarly to equation \eqref{eq:jaguar_circ1}, we can estimate $\circledTwo$ and $\circledThree$: 
        \begin{align*}
            &\circledTwo = \expect{\norm{
                (1 - \beta) e^t (e^t)^T b^t
            }_2^2}
            =
            \frac{(1-\beta)^2}{d} \expect{\norm{b^t}^2},
            \\&
            \circledThree = \expect{\norm{
                \left\{I - (1-\beta)e^t (e^t)^T \right\} c^t}_2^2}
            =
            \left(1 - \frac{1 - \beta^2}{d} \right) \expect{\norm{c^t}^2} .
        \end{align*}
        Since $b^t = \widetilde{\nabla} f(x^t, \xi^t) - \nabla f(x^t)$, we can use Lemma 4 from \cite{veprikov2024new} with $\sigma_f = 0, \sigma_\nabla = \sigma$ and obtain the result of the form:
        \begin{align}
        \label{eq:jaguar_circ2}
            \circledTwo
            \leq
            \frac{(1-\beta)^2}{d} 
            \cdot
            \left( 
                d L^2 \tau^2 
                + 2 d \sigma^2 + \frac{2 d \Delta^2}{\tau^2}
            \right),
        \end{align}
        where $L, \sigma$ and $\Delta$ come from Assumptions \ref{as:lip}, \ref{as:sigma} and \ref{as:delta}. 
        
        Since $c^t = \nabla f(x^t) - \nabla f(x^{t-1})$, we can use Assumption \ref{as:lip} and obtain:
        \begin{align}
            \circledThree
            &\leq
            \left(1 - \frac{1 - \beta^2}{d} \right) L^2 \norm{x^t - x^{t-1}}_2^2 
            =
            \left(1 - \frac{1 - \beta^2}{d} \right) L^2 \norm{\text{sign} (m^t)}_2^2 
            \nonumber
            \\&= 
            \label{eq:jaguar_circ3}
            \left(1 - \frac{1 - \beta^2}{d} \right) d L^2 \gamma^2
            \leq
            d L^2 \gamma^2 .
        \end{align}

        Consider $\circledFour$. Let us move all matrixes to the left side of the dot product:
        \begin{align*}
            \circledFour 
            &=
            \expect{2\dotprod{
                (1-\beta)\left\{I - (1-\beta)e^t (e^t)^T \right\} e^t (e^t)^T \cdot a^t
            }{
                b^t
            }}
            \\&=
            \expect{2\dotprod{
                (1-\beta) \beta e^t (e^t)^T \cdot a^t
            }{
                b^t
            }}.
        \end{align*}
        Now we use tower property for $i_t$ as we did for $\circledOne, \circledTwo,\circledThree$ and use the definitions of $a^t$ and $b^t$:
        \begin{align*}
            \circledFour 
            &=
            \frac{(1-\beta) \beta}{d} \cdot \expect{2\dotprod{
                a^t
            }{
                b^t
            }}
            \\&=
            \frac{(1-\beta) \beta}{d} \cdot \expect{2\dotprod{
                m^{t-1} - \nabla f(x^{t-1}) 
            }{
                \widetilde{\nabla} f(x^t, \xi^t) - \nabla f(x^t)
            }}.
        \end{align*}
        We now again use tower property, but with stochastic variable $\xi^t$. Since $\{m^{s-1}, x^s\}_{s=0}^t$ do not depend on $\xi^t$, we can obtain that:
        \begin{align}
            \circledFour 
            &=
            \frac{(1-\beta) \beta}{d} \cdot \expect{2\dotprod{
                m^{t-1} - \nabla f(x^{t-1}) 
            }{
                \EEb{\xi^t}{\widetilde{\nabla} f(x^t, \xi^t)} - \nabla f(x^t)
            }}
            \nonumber
            \\&\leq
            \frac{(1-\beta) \beta}{2d} \cdot \expect{\norm{m^{t-1} - \nabla f(x^{t-1})}_2^2}
            \label{eq:jaguar_tmp_3}
            \\&~\quad+
            \frac{2(1-\beta) \beta}{d} \cdot \expect{\norm{\EEb{\xi^t}{\widetilde{\nabla} f(x^t, \xi^t)} - \nabla f(x^t)}_2^2}.
            \nonumber
        \end{align}
        In \eqref{eq:jaguar_tmp_3} we use Fenchel-Young inequality. For estimating $\|\mathbb{E}_{\xi^t}[\widetilde{\nabla} f(x^t, \xi^t)] - \nabla f(x^t)\|_2^2$ we again can use Lemma 4 from \cite{veprikov2024new} but now with $\sigma_\nabla = \sigma_f = 0$ since we have no randomness in $\EEb{\xi^t}{\widetilde{\nabla} f(x^t, \xi^t)}$. Therefore $\circledFour$ is bounded as:
        \begin{align}
        \label{eq:jaguar_circ4}
            \circledFour 
            &\leq
            \frac{(1-\beta) \beta}{2d} \cdot \expect{\norm{a^t}_2^2}
            +
            \frac{2(1-\beta) \beta}{d} \cdot \left( 
                d L^2 \tau^2 
                + \frac{2 d \Delta^2}{\tau^2} 
            \right).
        \end{align}
        Consider $\circledFive$. Similar to $\circledFour$ we can obtain:
        \begin{align}
            \circledFive 
            &= 
            \expect{2\dotprod{
                \left\{I - (1-\beta)e^t (e^t)^T \right\} a^t
            }{
                \left\{I - (1-\beta)e^t (e^t)^T \right\} c^t
            }}
            \nonumber
            \\&=
            \expect{2\dotprod{
                \left\{I - (1-\beta^2)e^t (e^t)^T \right\} a^t
            }{
                c^t
            }}
            \nonumber
            \\&=
            \left(1 - \frac{1 - \beta^2}{d} \right) \cdot \expect{2\dotprod{
                a^t
            }{
                c^t
            }}
            \nonumber
            \\&\leq
            \left(1 - \frac{1 - \beta^2}{d} \right) \cdot \frac{1 - \beta}{2d} \cdot \expect{\norm{a^t}_2^2}
            +
            \left(1 - \frac{1 - \beta^2}{d} \right) \cdot \frac{2d}{1 - \beta} \cdot \expect{\norm{c^t}_2^2}
            \nonumber
            \\&\leq
            \frac{1 - \beta}{2d} \cdot \expect{\norm{a^t}_2^2}
            +
            \frac{2d}{1 - \beta} \cdot d L^2 \gamma^2 .
            \label{eq:jaguar_circ5}
        \end{align}
        Finally, we estimate $\circledSix$ in the same way:
        \begin{align}
            \circledSix 
            &=
            \expect{2\dotprod{
                (1 - \beta) e^t (e^t)^T b^t
            }{
                \left\{I - (1-\beta)e^t (e^t)^T \right\} c^t
            }}
            \nonumber
            \\&=
            \expect{2\dotprod{
                (1 - \beta) \beta e^t (e^t)^T b^t
            }{
                c^t
            }}
            \nonumber
            \\&=
            \frac{(1 - \beta) \beta}{d} \cdot \expect{2\dotprod{
                b^t
            }{
                c^t
            }}
            \nonumber
            \\&\leq
            \frac{(1 - \beta) \beta}{d} \cdot 
            \expect{\norm{\EEb{\xi^t}{\widetilde{\nabla} f(x^t, \xi^t)} - \nabla f(x^t)}_2^2}
            +
            \frac{(1 - \beta) \beta}{d} \cdot \expect{\norm{c^t}_2^2}
            \nonumber
            \\&\leq
            \frac{(1 - \beta) \beta}{d} \cdot \left( 
                d L^2 \tau^2 
                + \frac{2 d \Delta^2}{\tau^2} 
            \right)
            +
            \frac{(1 - \beta) \beta}{d} \cdot dL^2\gamma^2 .
            \label{eq:jaguar_circ6}
        \end{align}
        We made it! Now let us combine equations \eqref{eq:jaguar_circ1}, \eqref{eq:jaguar_circ2}, \eqref{eq:jaguar_circ3}, \eqref{eq:jaguar_circ4}, \eqref{eq:jaguar_circ5} and \eqref{eq:jaguar_circ6} to bound $\mathbb{E}[\|a^{t+1}\|_2^2]$ from equation \eqref{eq:jaguar_app_2}:
        \begin{align*}
            \expect{\norm{a^{t+1}}_2^2}
            &\leq
            \left( 1 - \frac{1 - \beta}{d} \left[ \underbrace{1 + \beta}_{\eqref{eq:jaguar_circ1}} - \underbrace{\frac{\beta}{2}}_{\eqref{eq:jaguar_circ4}} - \underbrace{\frac{1}{2}}_{\eqref{eq:jaguar_circ5}} \right] \right) \cdot \expect{\norm{a^t}_2^2}
            \\&~~~~+
            \frac{1-\beta}{d} \left( \underbrace{1 - \beta}_{\eqref{eq:jaguar_circ2}} + \underbrace{2\beta}_{\eqref{eq:jaguar_circ4}} + \underbrace{\beta}_{\eqref{eq:jaguar_circ6}} \right) \cdot \left( 
                d L^2 \tau^2 
                + \frac{2 d \Delta^2}{\tau^2} 
            \right)
            +
            \underbrace{\frac{(1-\beta)^2}{d}}_{\eqref{eq:jaguar_circ2}} \cdot 2 d \sigma^2
            \\&~~~~+
            \left(\underbrace{1}_{\eqref{eq:jaguar_circ3}} + \underbrace{\frac{2d}{1-\beta}}_{\eqref{eq:jaguar_circ5}} + \underbrace{\frac{(1-\beta)\beta}{d}}_{\eqref{eq:jaguar_circ6}} \right) \cdot dL^2 \gamma^2
            \\&\leq
            \left(1 - \frac{1 - \beta^2}{2d} \right) \cdot \expect{\norm{a^t}_2^2}
            \\&~~~~+
            3 \frac{1 - \beta}{d} \cdot \left( 
                d L^2 \tau^2 
                + \frac{2 d \Delta^2}{\tau^2} 
            \right)
            +
            2 \frac{(1-\beta)^2}{d} \cdot  d \sigma^2
            +
            \frac{4 d}{1 - \beta} \cdot d L^2 \gamma^2 .
        \end{align*}
        By unrolling the recursion in the last inequality we obtain:
        \begin{align*}
            \expect{\norm{m^t - \nabla f(x^t)}_2^2}
            &\leq
            8 \frac{d^2}{(1 - \beta)(1 - \beta^2)} \cdot d L^2 \gamma^2 
            +
            4 \frac{(1-\beta)^2}{1 - \beta^2} \cdot  d \sigma^2
            \\&~~~~+
            6 \frac{1 - \beta}{1 - \beta^2} \cdot \left( 
                d L^2 \tau^2 
                + \frac{2 d \Delta^2}{\tau^2} 
            \right)
            +
            \left( \frac{1 - \beta^2}{2d} \right)^t \norm{\nabla f(x^0)}_2^2
            \\&=
            \mathcal{O}\Bigg[ 
                \frac{d^3}{(1-\beta)^2} L^2 \gamma^2  
                +
                (1-\beta)d \sigma^2
                +
                d L^2 \tau^2 
                +
                \frac{2 d \Delta^2}{\tau^2} 
                \\&~~~~~~~~~~+ 
                \left( 1 - \frac{1 - \beta}{2d} \right)^t \norm{\nabla f(x^0)}_2^2
            \Bigg] .
        \end{align*}
        This finishes the proof.
    \end{proof}

    \subsection{Proof of Theorem \ref{theorem:jaguar_sign}}
    \begin{proof}
        We start from using Lemma 1 from \cite{sun2023momentum}. For the points $x^t$, generated by Algorithm \ref{algorithm:jaguar} it holds that:
        \begin{align}
        \label{eq:step_lemma_sign_sgd}
            f(x^{t+1}) - f(x^t) 
            \leq
            - \gamma \norm{\nabla f(x^t)}_1
            +
            2 \sqrt{d} \gamma \norm{m^t - \nabla f(x^t)}_2
            +
            \frac{d L \gamma^2}{2} .
        \end{align}
        Now we take mathematical expectation of the both sides of the inequality \eqref{eq:step_lemma_sign_sgd} and use the results from Lemma \ref{lemma:mt_jaguar}:
        \begin{align*}
            \expect{f(x^{t+1})} - \expect{f(x^t)}
            &\leq
            - \gamma \expect{\norm{\nabla f(x^t)}_1}
            +
            2 \sqrt{d} \gamma \expect{\norm{m^t - \nabla f(x^t)}_2}
            +
            \frac{d L \gamma^2}{2}
            \\&=
            - \gamma \expect{\norm{\nabla f(x^t)}_1}
            +
            \mathcal{O} \Bigg[ 
                \frac{d^2}{1-\beta}\cdot L \gamma^2  
                +
                \sqrt{1-\beta}d \gamma \sigma
                +
                d \gamma L \tau
                \\&~~~~~+
                \frac{d \gamma \Delta}{\tau} 
                + 
                \sqrt{d} \gamma \left( 1 - \frac{1 - \beta}{d} \right)^{t/2} \norm{\nabla f(x^0)}_2
            \Bigg] +
            \frac{d L \gamma^2}{2} .
        \end{align*}
        Consequently, after summing all $T$ steps, we obtain:
        \begin{equation}
        \label{eq:tmp_1111}
        \begin{split}
            \gamma \sum_{t=0}^T \expect{\norm{\nabla f(x^t)}_1}
            =
            \mathcal{O} \Bigg[ 
                &f(x^0) - f(x^T)
                +
                T \cdot \left( \frac{d^2}{1-\beta}\cdot L \gamma^2  
                +
                \sqrt{1-\beta}d \gamma \sigma
                +
                d \gamma L \tau
                \right)
                \\&+
                T \cdot \frac{d \gamma \Delta}{\tau} 
                + 
                \sqrt{d} \gamma \sum_{t=0}^T \left( 1 - \frac{1 - \beta}{d} \right)^{t/2} \norm{\nabla f(x^0)}_2
            \Bigg] .
        \end{split}
        \end{equation}
        Now, we divide equation \eqref{eq:tmp_1111} by $\gamma T$ from both sides and obtain:
        \begin{equation*}
        \begin{split}
            \frac{1}{T} \sum_{t=0}^T \expect{\norm{\nabla f(x^t)}_1}
            =
            \mathcal{O} \Bigg[ 
                &\frac{\delta_0}{\gamma T}
                +
                \frac{d \norm{\nabla f(x^0)}_2}{T \sqrt{1 - \beta}}
                +
                \frac{d^2 L \gamma}{1-\beta}  
                +
                \sqrt{1-\beta}d \sigma
                +
                d L \tau
                +
                \frac{d \Delta}{\tau} 
            \Bigg],
        \end{split}
        \end{equation*}
        where we used a notation $\delta_0 := f(x^0) - f^*$. This finishes the proof.
    \end{proof}

\section{Proofs for ZO Muon with JAGUAR (Algorithm \ref{algorithm:muon})}

    \subsection{Technical Lemmas}
    \begin{lemma}
    \label{lemma:A_dotprod_U}
        Consider two arbitrary matrixes $A, B$ of the same shape and their SVD decomposition: $A = U_A \Sigma_A V_A^T$, $B = U_B \Sigma_B V_B^T$. Define $r_A$ and $r_B$ as ranks of $A$ and $B$, then it holds that
        \begin{align*}
            \left| \dotprod{A}{U_A V_A^T - U_B V_B^T} \right|
            \leq
            2\norm{A - B}_{\mathcal{S}_1} 
            \leq
            2\sqrt{\operatorname{rank}(A-B)} \norm{A - B}_F .
        \end{align*}
    \end{lemma}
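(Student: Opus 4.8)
The plan is to reduce the first inequality to two standard facts about singular values: the self-duality identity $\dotprod{A}{U_A V_A^T} = \norm{A}_{\mathcal{S}_1}$, and the trace (Hölder) inequality $|\dotprod{X}{Y}| \le \norm{X}_{\mathcal{S}_1}\norm{Y}_{\mathcal{S}_\infty}$, where $\norm{\cdot}_{\mathcal{S}_\infty}$ is the spectral (largest-singular-value) norm. First I would establish the identity from the reduced SVD: since $U_A^T U_A = V_A^T V_A = I$, one has $\dotprod{A}{U_A V_A^T} = \operatorname{tr}(V_A \Sigma_A U_A^T U_A V_A^T) = \operatorname{tr}(\Sigma_A) = \norm{A}_{\mathcal{S}_1}$, and likewise $\dotprod{B}{U_B V_B^T} = \norm{B}_{\mathcal{S}_1}$. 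Hence
\[
    \dotprod{A}{U_A V_A^T - U_B V_B^T} = \norm{A}_{\mathcal{S}_1} - \dotprod{A}{U_B V_B^T}.
\]

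Next I would sandwich the term $\dotprod{A}{U_B V_B^T}$. For the upper bound, $U_B V_B^T$ is a partial isometry, so $\norm{U_B V_B^T}_{\mathcal{S}_\infty} \le 1$, and the trace inequality gives $\dotprod{A}{U_B V_B^T} \le \norm{A}_{\mathcal{S}_1}$; this already shows the quantity of interest is nonnegative. For the lower bound I would split $\dotprod{A}{U_B V_B^T} = \dotprod{B}{U_B V_B^T} + \dotprod{A-B}{U_B V_B^T} = \norm{B}_{\mathcal{S}_1} + \dotprod{A-B}{U_B V_B^T} \ge \norm{B}_{\mathcal{S}_1} - \norm{A-B}_{\mathcal{S}_1} \ge \norm{A}_{\mathcal{S}_1} - 2\norm{A-B}_{\mathcal{S}_1}$, where the last step uses the triangle inequality for $\norm{\cdot}_{\mathcal{S}_1}$. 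Combining the two bounds yields $0 \le \dotprod{A}{U_A V_A^T - U_B V_B^T} \le 2\norm{A-B}_{\mathcal{S}_1}$, which in particular controls the absolute value and proves the first inequality.

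For the second inequality I would apply Cauchy–Schwarz to the singular values of $C := A - B$: writing $r = \operatorname{rank}(C)$, we get $\norm{C}_{\mathcal{S}_1} = \sum_{i=1}^{r}\sigma_i(C) \le \sqrt{r}\left(\sum_{i=1}^{r}\sigma_i(C)^2\right)^{1/2} = \sqrt{\operatorname{rank}(C)}\,\norm{C}_F$.

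There is no genuine obstacle here; the argument is elementary. The only non-routine ingredient is the trace/von Neumann inequality $|\dotprod{X}{Y}| \le \norm{X}_{\mathcal{S}_1}\norm{Y}_{\mathcal{S}_\infty}$, which I would cite as a known fact, and the one point requiring care is that the SVD in play is the \emph{reduced} one, so $U_A, V_A$ have only orthonormal columns rather than being square orthogonal matrices — this is exactly what makes $\dotprod{A}{U_A V_A^T} = \operatorname{tr}(\Sigma_A)$ hold, and it handles the degenerate case $B = 0$ (where $U_B V_B^T = 0$) automatically.
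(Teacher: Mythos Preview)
Your proposal is correct and follows essentially the same route as the paper: both use the identity $\dotprod{A}{U_A V_A^T}=\|A\|_{\mathcal{S}_1}$, then split off $B$ to write $\dotprod{A}{U_B V_B^T}=\|B\|_{\mathcal{S}_1}+\dotprod{A-B}{U_B V_B^T}$, and conclude via the triangle inequality for $\|\cdot\|_{\mathcal{S}_1}$ together with the H\"older/von Neumann bound $|\dotprod{A-B}{U_B V_B^T}|\le\|A-B\|_{\mathcal{S}_1}$. Your additional observation that the inner product is in fact nonnegative is a small bonus the paper does not record, but the argument is otherwise identical.
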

    \begin{proof}
        We first provide an axillary notation:
        \begin{equation*}
            \delta := \dotprod{A}{U_A V_A^T - U_B V_B^T}.
        \end{equation*}
        Because $U_A$ and $V_A$ have orthonormal columns:  
        \[
        \langle A,\,U_A V_A^\top\rangle
        = \operatorname{tr}\!\bigl(V_A \Sigma_A U_A^\top U_A V_A^\top\bigr)
        = \operatorname{tr}(\Sigma_A)
        = \|A\|_{\mathcal{S}_1}.
        \]
        Hence  
        \[
        \delta
        = \|A\|_{\mathcal{S}_1} - \langle A,\,U_B V_B^\top\rangle.
        \]
    
        Insert $B$ and regroup:
        \[
        \delta
        = \|A\|_{\mathcal{S}_1}
           - \bigl(\langle B,\,U_B V_B^\top\rangle
                  + \langle A-B,\,U_B V_B^\top\rangle\bigr)
        = \|A\|_{\mathcal{S}_1} - \|B\|_{\mathcal{S}_1}
          - \langle A-B,\,U_B V_B^\top\rangle .
        \]
    
        The first difference is controlled by the triangle inequality for the nuclear norm:
        \[
        \bigl|\|A\|_{\mathcal{S}_1} - \|B\|_{\mathcal{S}_1}\bigr|
        \le \|A-B\|_{\mathcal{S}_1}.
        \]
        For the second term, Hölder’s inequality with $\|U_B V_B^\top\|_2 = 1$ gives
        \[
        \bigl|\langle A-B,\,U_B V_B^\top\rangle\bigr|
        \le \|A-B\|_{\mathcal{S}_1}.
        \]
        
        Therefore
        \[
        |\delta|
        \le \|A-B\|_{\mathcal{S}_1} + \|A-B\|_{\mathcal{S}_1}
        = 2\,\|A-B\|_{\mathcal{S}_1}.
        \]
        
        Using the connection between the Frobenius ($\mathcal{S}_2$) by nuclear ($\mathcal{S}_1$) norms we obtain that:
        \[
        |\delta|
        =
        \dotprod{A}{U_A V_A^T - U_B V_B^T}
        \le 2\,\|A-B\|_{\mathcal{S}_1}
        \le 2\sqrt{\operatorname{rank}(A-B)}\,\|A-B\|_F .
        \]
        The factor $2$ in the nuclear norm bound is sharp, as equality holds for $B=-A$. This finishes the proof.
    \end{proof}

    We now provide lemma similar to the step Lemma 1 from \cite{sun2023momentum}, but in the matrix case.
    \begin{lemma}[Step lemma for Muon with momentum]
    \label{lemma:muon_step}
        Let $f$ be an $L$-smooth function (Assumption \ref{as:lip}), and let  $X^\dagger, M \in \mathbb{R}^{m \times n}$ with $m \geq n$ be an arbitrary matrixes. We define
        \begin{equation*}
            X^\ddagger := X^\dagger - \gamma \cdot U_M V_M^T,
        \end{equation*}
        where $\gamma > 0$ and $U_M V_M^T$ comes from SVD decomposition of $M$: $M = U_M \Sigma_M V_M^T$. Then, it holds that:
        \begin{align*}
            f\left( X^\ddagger \right) - f\left( X^\dagger \right)
            \leq
            -\gamma \norm{\nabla f\left( X^\dagger \right)}_{\mathcal{S}_1}
            +
            2 \sqrt{n}\gamma \norm{\nabla f\left( X^\dagger \right) - M}_{F}
            +
            \frac{L n \gamma^2}{2}.
        \end{align*}
    \end{lemma}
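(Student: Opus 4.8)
The plan is to run the standard descent-lemma argument for $L$-smooth functions and then replace the orthogonalized momentum direction $U_M V_M^T$ by the orthogonalized gradient direction $U_G V_G^T$ using Lemma \ref{lemma:A_dotprod_U}. First I would invoke Assumption \ref{as:lip} (with the Frobenius norm) to get the quadratic upper bound
\begin{equation*}
    f\left(X^\ddagger\right) \leq f\left(X^\dagger\right) + \dotprod{\nabla f\left(X^\dagger\right)}{X^\ddagger - X^\dagger} + \frac{L}{2}\norm{X^\ddagger - X^\dagger}_F^2 .
\end{equation*}
Since $X^\ddagger - X^\dagger = -\gamma\, U_M V_M^T$ and $U_M V_M^T$ is a partial isometry with $\operatorname{rank}(M) \leq \min\{m,n\} = n$, we have $\norm{X^\ddagger - X^\dagger}_F^2 = \gamma^2 \norm{U_M V_M^T}_F^2 = \gamma^2 \operatorname{rank}(M) \leq n\gamma^2$, which already produces the $\tfrac{Ln\gamma^2}{2}$ term.

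Next I would write $G := \nabla f(X^\dagger) = U_G \Sigma_G V_G^T$ and decompose the linear term as
\begin{equation*}
    \dotprod{G}{X^\ddagger - X^\dagger} = -\gamma \dotprod{G}{U_M V_M^T} = -\gamma \dotprod{G}{U_G V_G^T} + \gamma \dotprod{G}{U_G V_G^T - U_M V_M^T} .
\end{equation*}
The first piece is exact: $\dotprod{G}{U_G V_G^T} = \operatorname{tr}(\Sigma_G) = \norm{G}_{\mathcal{S}_1}$. For the second piece I apply Lemma \ref{lemma:A_dotprod_U} with $A = G$ and $B = M$, giving $\left|\dotprod{G}{U_G V_G^T - U_M V_M^T}\right| \leq 2\sqrt{\operatorname{rank}(G-M)}\norm{G-M}_F \leq 2\sqrt{n}\norm{G-M}_F$, where the last step again uses $\operatorname{rank}(G-M) \leq n$.

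Combining these three estimates and using $\gamma > 0$ yields exactly
\begin{equation*}
    f\left(X^\ddagger\right) - f\left(X^\dagger\right) \leq -\gamma\norm{\nabla f\left(X^\dagger\right)}_{\mathcal{S}_1} + 2\sqrt{n}\gamma\norm{\nabla f\left(X^\dagger\right) - M}_F + \frac{Ln\gamma^2}{2} .
\end{equation*}
This is a short argument and I do not anticipate a genuine obstacle; the only points requiring a little care are (i) making sure the rank bounds $\operatorname{rank}(M) \leq n$ and $\operatorname{rank}(G-M) \leq n$ are used where the generic bound in Lemma \ref{lemma:A_dotprod_U} would otherwise give $\sqrt{\operatorname{rank}(\cdot)}$, and (ii) checking that $\norm{U_M V_M^T}_F^2$ equals the rank of $M$ rather than $n$ in general, so that the $m \geq n$ assumption is what closes the gap. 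The subsequent application of this lemma inside the convergence proof of Theorem \ref{theorem:jaguar_muon} will then mirror the use of Lemma 1 of \cite{sun2023momentum} in the proof of Theorem \ref{theorem:jaguar_sign}, with $\sqrt{d}$ replaced by $\sqrt{n}$, $d$ replaced by $n$, and the $\ell_1$ norm replaced by the Schatten-1 norm.
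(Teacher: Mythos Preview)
Your proposal is correct and matches the paper's proof essentially step for step: $L$-smoothness descent inequality, $\norm{U_M V_M^T}_F^2 \le n$, the add-and-subtract of $U_G V_G^T$ (the paper writes $U_\nabla V_\nabla^T$), the identity $\dotprod{G}{U_G V_G^T}=\norm{G}_{\mathcal{S}_1}$, and Lemma~\ref{lemma:A_dotprod_U} with $A=G$, $B=M$. The only cosmetic difference is that the paper first states the $\mathcal{S}_1$ bound from Lemma~\ref{lemma:A_dotprod_U} and then passes to Frobenius, whereas you go straight to the Frobenius form; your explicit remarks on the rank bounds are, if anything, more careful than the paper's.
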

    \begin{proof}
        The $L$-smoothness of the gradient (Assumption \ref{as:lip}) gives us 
        \begin{align*}
            f\left( X^\ddagger \right) - f\left( X^\dagger \right)
            &\leq
            \dotprod{\nabla f\left( X^\dagger \right)}{X^\ddagger - X^\dagger}
            +
            \frac{L}{2} \norm{X^\ddagger - X^\dagger}_F^2
            \\&=
            -\gamma \dotprod{\nabla f\left( X^\dagger \right)}{U_M V_M^T}
            +
            \frac{L n \gamma^2}{2}
            \\&=
            -\gamma \dotprod{\nabla f\left( X^\dagger \right)}{U_\nabla V_\nabla^T}
            +
            \gamma \dotprod{\nabla f\left( X^\dagger \right)}{U_\nabla V_\nabla^T - U_M V_M^T}
            +
            \frac{L n \gamma^2}{2},
        \end{align*}
        where $U_\nabla V_\nabla^T$ comes from SVD decomposition of $\nabla f\left( X^\dagger \right)$: $\nabla f\left( X^\dagger \right) = U_\nabla \Sigma_\nabla V_\nabla^T$. Therefore the first dot product takes form:
        \begin{align*}
            -\gamma \dotprod{\nabla f\left( X^\dagger \right)}{U_\nabla V_\nabla^T}
            &=
            -\gamma \text{tr} \left( V_\nabla \Sigma_\nabla U_\nabla^T U_\nabla V_\nabla^T\right)
            =
            -\gamma \text{tr} \left( \Sigma_\nabla\right)
            =
            - \gamma \norm{\nabla f\left( X^\dagger \right)}_{\mathcal{S}_1} .
        \end{align*}

        Now we utilize Lemma \ref{lemma:A_dotprod_U} with $A = \nabla f\left( X^\dagger \right)$ and $B = M$:
        \begin{align*}
            f\left( X^\ddagger \right) - f\left( X^\dagger \right)
            &\leq
            -\gamma \norm{\nabla f\left( X^\dagger \right)}_{\mathcal{S}_1}
            +
            2 \gamma \norm{\nabla f\left( X^\dagger \right) - M}_{\mathcal{S}_1}
            +
            \frac{L n \gamma^2}{2}
            \\&\leq
            -\gamma \norm{\nabla f\left( X^\dagger \right)}_{\mathcal{S}_1}
            +
            2 \sqrt{n}\gamma \norm{\nabla f\left( X^\dagger \right) - M}_{F}
            +
            \frac{L n \gamma^2}{2}.
        \end{align*}
        This finishes the proof.
    \end{proof}

\subsection{Proof of Theorem \ref{theorem:jaguar_muon}}

    \begin{proof}
        We start from using Lemma \ref{lemma:muon_step}. For the points $X^t$, generated by Algorithm \ref{algorithm:muon} it holds that:
        \begin{align}
        \label{eq:muon_tmp_1}
            f\left( X^{t+1} \right) - f\left( X^t \right)
            &\leq
            -\gamma \norm{\nabla f\left( X^t \right)}_{\mathcal{S}_1}
            +
            2 \sqrt{n}\gamma \norm{\nabla f\left( X^t \right) - M^t}_{F}
            +
            \frac{L n \gamma^2}{2}.
        \end{align}
        Now we take mathematical expectation of the both sides if \eqref{eq:muon_tmp_1} and bound the term $\mathbb{E}[\|\nabla f\left( X^t \right) - M^t\|_{F}]$ we again use Lemma \ref{lemma:mt_jaguar} with $x^t = \overline{\text{vec}}(X^t)$ and $ m^t = \overline{\text{vec}}(M^t)$. The result of Lemma \ref{lemma:mt_jaguar} holds true with $d = m \cdot n$, since $\norm{A}_F = \norm{\overline{\text{vec}}(A)}_2$. Therefore \eqref{eq:muon_tmp_1} takes form:
        \begin{align*}
            \expect{f(X^{t+1})} - \expect{f(X^t)}
            &\leq
            - \gamma \expect{\norm{\nabla f(X^t)}_{\mathcal{S}_1}}
            +
            2 \sqrt{n} \gamma \expect{\norm{M^t - \nabla f(X^t)}_2}
            +
            \frac{n L \gamma^2}{2}
            \\&=
            - \gamma \expect{\norm{\nabla f(X^t)}_{\mathcal{S}_1}}
            +
            n^{1/2}\mathcal{O} \Bigg[ 
                \frac{(mn)^{3/2}}{1-\beta}\cdot L \gamma^2  
                \\&~~~~~+
                \sqrt{1-\beta}(mn)^{1/2} \gamma \sigma
                +
                (mn)^{1/2} \gamma L \tau
                +
                \frac{(mn)^{1/2} \gamma \Delta}{\tau} 
                \\&~~~~~+
                n^{1/2} \gamma \left( 1 - \frac{1 - \beta}{mn} \right)^{t/2} \norm{\nabla f(X^0)}_2
            \Bigg] +
            \frac{n L \gamma^2}{2} .
        \\&=
        - \gamma \expect{\norm{\nabla f(X^t)}_{\mathcal{S}_1}}
        +
        \mathcal{O} \Bigg[  
            \frac{m^{3/2}n^{2}}{1-\beta}\cdot L \gamma^2
            \\&~~~~~+
            \sqrt{1-\beta} m^{1/2}n \gamma \sigma
            +
            m^{1/2}n \gamma L \tau
            +
            \frac{m^{1/2}n \gamma \Delta}{\tau} 
            \\&~~~~~+
            n^{1/2} \gamma \left( 1 - \frac{1 - \beta}{mn} \right)^{t/2} \norm{\nabla f(X^0)}_2
        \Bigg] .
        \end{align*}
        Consequently, after summing all $T$ steps, we obtain:
        \begin{equation}
        \label{eq:tmp_222222}
        \begin{split}
            \gamma \sum_{t=0}^T \expect{\norm{\nabla f(X^t)}_{\mathcal{S}_1}}
            =
            \mathcal{O} \Bigg[ 
                &f(X^0) - f(X^T)
                \\&+
                T \cdot \left( \frac{m^{3/2}n^{2}}{1-\beta}\cdot L \gamma^2  
                +
                \sqrt{1-\beta} m^{1/2}n \gamma \sigma
                \right)
                \\&+
                T \cdot \left(
                m^{1/2}n \gamma L \tau
                +
                \frac{m^{1/2}n \gamma \Delta}{\tau} 
                \right)
                \\&+
                n^{1/2} \gamma \sum_{t=0}^T \left( 1 - \frac{1 - \beta}{mn} \right)^{t/2} \norm{\nabla f(X^0)}_2
            \Bigg] .
        \end{split}
        \end{equation}
        Now, we divide equation \eqref{eq:tmp_222222} by $\gamma T$ from both sides and obtain:
        \begin{equation*}
        \begin{split}
            \frac{1}{T} \sum_{t=0}^T \expect{\norm{\nabla f(X^t)}_{\mathcal{S}_1}}
            =
            \mathcal{O} \Bigg[ 
                &\frac{\delta_0}{\gamma T}
                +
                \frac{m^{1/2}n \norm{\nabla f(x^0)}_2}{T \sqrt{1 - \beta}}
                +
                \frac{m^{3/2}n^2 \gamma}{1-\beta}  
                +
                \sqrt{1-\beta} m^{1/2}n \sigma
                \\&+
                m^{1/2}n L \tau
                +
                \frac{m^{1/2}n \Delta}{\tau} 
            \Bigg],
        \end{split}
        \end{equation*}
        where we used a notation $\delta_0 := f(x^0) - f^*$. This finishes the proof.
    \end{proof}

\end{appendixpart}
\end{document}